\theoremstyle{plain}
\newtheorem{theorem}{Theorem}[section]
\newtheorem{proposition}[theorem]{Proposition}
\theoremstyle{definition}
\newtheorem{definition}[theorem]{Definition}
\theoremstyle{remark}
\begin{document}

\twocolumn[
\icmltitle{Robust Watermarks Leak: Channel-Aware Feature Extraction Enables Adversarial Watermark Manipulation}




\icmlsetsymbol{corr}{*}

\begin{icmlauthorlist}

\icmlauthor{Zhongjie Ba}{sch}
\icmlauthor{Yitao Zhang}{sch}
\icmlauthor{Peng Cheng}{sch,corr}
\icmlauthor{Bin Gong}{sch}
\icmlauthor{Xinyu Zhang}{sch}
\icmlauthor{Qinglong Wang}{sch}
\icmlauthor{Kui Ren}{sch}

\icmlauthor{}{sch} The State Key Laboratory of
Blockchain and Data Security, Zhejiang University, Hangzhou, China
\end{icmlauthorlist}




\vskip 0.3in
]




\renewcommand{\thefootnote}{}
\footnotetext{\textsuperscript{*}Corresponding author}

\begin{abstract}
Watermarking plays a key role in the provenance and detection of AI-generated content. While existing methods prioritize robustness against real-world distortions (e.g., JPEG compression and noise addition), we reveal a fundamental tradeoff: such robust watermarks inherently improve the redundancy of detectable patterns encoded into images, creating exploitable information leakage. To leverage this, we propose an attack framework that extracts leakage of watermark patterns through multi-channel feature learning using a pre-trained vision model. Unlike prior works requiring massive data or detector access, our method achieves both forgery and detection evasion with a single watermarked image. Extensive experiments demonstrate that our method achieves a 60\% success rate gain in detection evasion and 51\% improvement in forgery accuracy compared to state-of-the-art methods while maintaining visual fidelity. Our work exposes the robustness-stealthiness paradox: current "robust" watermarks sacrifice security for distortion resistance, providing insights for future watermark design.
\end{abstract}


\section{Introduction}
Watermarking is a well-recognized technique in the era of artificial intelligence-generated content (AIGC) for the purpose of content provenance and deepfake detection~\cite{jiang2024watermark,jiang2023evading}. Although the emergence of generative AI products such as Midjourney~\cite{Midjourney}, DALL-E~\cite{DALLE3}, and Sora~\cite{Sora} has lowered the bar of access to cutting-edge AI technology for ordinary users, it also benefits misleading content generation and the spread of misinformation for evildoers~\cite{fakenews1}. To address this, leading IT companies providing AIGC services, including OpenAI, Alphabet, and Meta, have committed to deploying watermark mechanisms in their products to make the technology safer~\cite{Whitehouse}. Watermark's capabilities of tracing, provenance, and detection of AIGC content facilitate the protection of content creators' intellectual property and the reputation of companies.

\begin{figure}[!t]
    \centering
    \includegraphics[width=\linewidth]{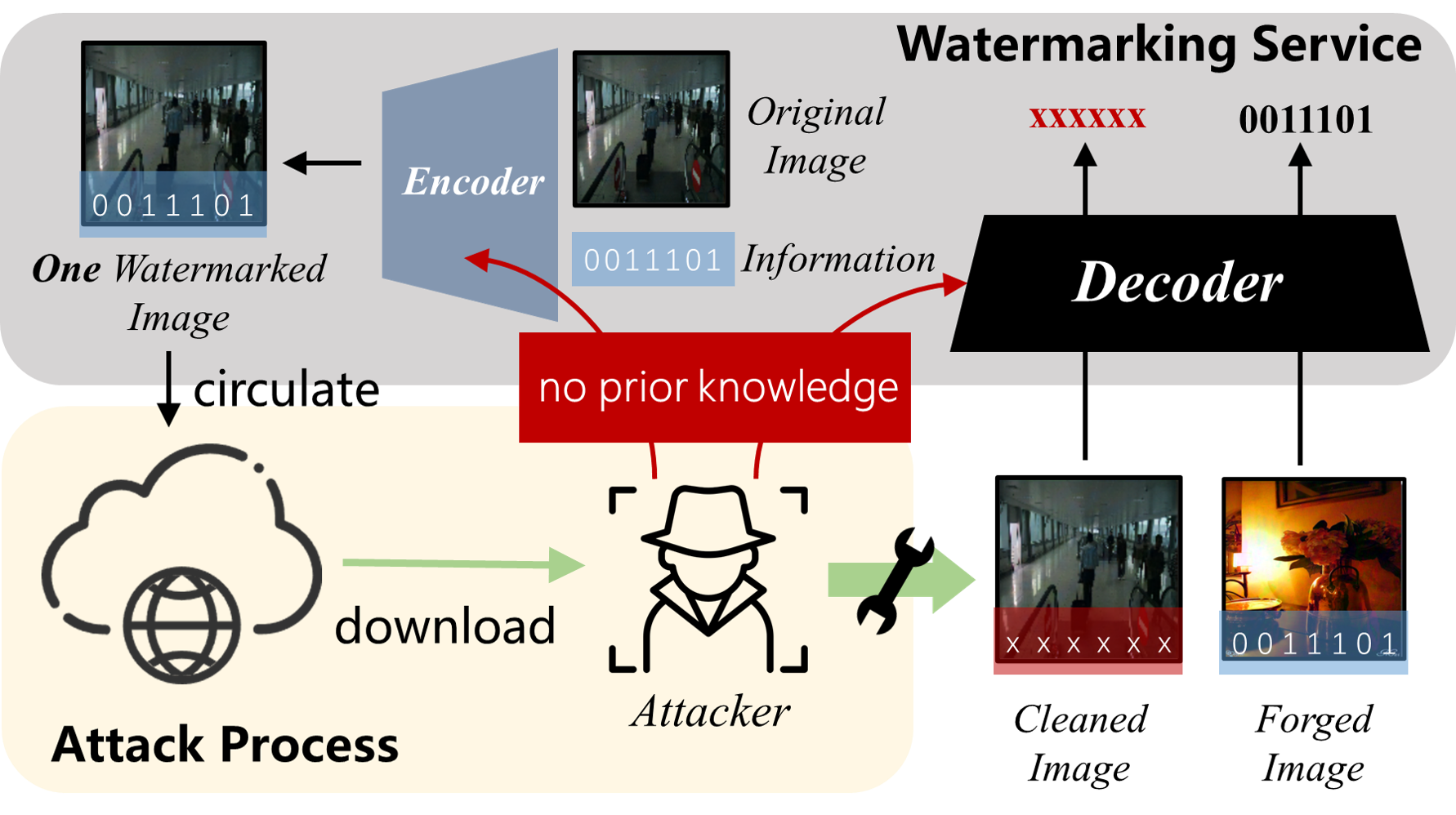} 
    
    \caption{Demonstration of our attacks. An attacker can perform watermark removal and forgery attacks with only one watermarked image without knowledge about the underlying watermarking systems. The attacker is free of copyright violation accusations as the extracted watermark is incorrect; the attacker can spread fake news by forging the watermark of an authoritative media.} 
    \label{fig:intro}
    \vspace{-1mm}
\end{figure}

Watermarking methods improve robustness against distortions, enhancing practicality in real-world applications. When images spread across social media platforms, they often encounter distortions such as compression, noise addition, and screen-shooting~\cite{PIMoG}. Improving watermark robustness against these operations ensures the watermark remains detectable after Internet circulation.

Apart from distortions, watermarks are susceptible to adversarial attacks, mainly detection evasion and forgery attacks. These attacks can spread Not-Safe-for-Work (NSFW) content, copyright violations, and reputation undermining. An attacker can remove the watermark from an artist's imagery, thus evading the watermark detection and claiming ownership of the asset for commercial purposes. An attacker can also analyze the watermark pattern hidden in a watermarked image, extract and transfer it to a clean image depicting illegal content to make fake news trustworthy and damage a specific user or an organization's reputation. 

Studying the security of watermarking schemes is important due to its wide deployment in the AIGC era. We mainly discuss \textbf{the security of watermark schemes that incorporate a distortion layer to strengthen their robustness.} These watermarking methods are promising for practical use and exhibit better resilience when encountering attacks. 

Most watermark security research focuses on evading detection, and studies regarding forgery attacks are on the rise. \textbf{Detection Evasion:} We categorize relevant literature into adversarial example-based and reconstruction-based attacks. The former typically requires a large amount of watermark data—often including pairs of watermarked and original images—for perturbation training~\cite{lukas2024leveraging,yang2024steganalysis, saberi2023robustness} or direct access to the watermark detector for querying, sometimes even knowledge of the decoder~\cite{lukas2024leveraging}; the latter introduces noticeable modifications to the carrier image content during reconstruction~\cite{saberi2023robustness, zhao2023invisible, Kassis2024Unmarker}. \textbf{Watermark Forgery:} Watermark forgery represents a new research frontier, with its attack methodologies still in the developmental stages. The existing method relies on impractical assumption~\cite{kutter2000watermark}, requires access to the watermarking encoder~\cite{saberi2023robustness}, or demands substantial imagery that features the target watermark, which is ineffective against dynamic watermarks that incorporate time-sensitive nuance.

In summary, existing detection evasion methods are computationally expensive, rely on strong assumptions, or significantly alter the original image's semantics. Meanwhile, forgery techniques are either impractical or high-cost, and the overall effectiveness of both attacks remains limited.

In this paper, we identify a key weakness in watermarking systems designed to be robust against distortions and propose a highly effective attack framework that exploits this vulnerability. Our observation is that \textbf{these systems enhance robustness by increasing watermark information redundancy. However, this heightened redundancy inadvertently makes watermark leakage easier} (see Sec.~\ref{sec:pilot} for details). Based on the observation, we \underline{\textbf{D}}elve into the \underline{\textbf{A}}spect of the \underline{\textbf{PA}}radox \underline{\textbf{O}}f Robust Watermarks and propose the \textbf{DAPAO} attack.

DAPAO attack is a framework that effectively extracts the digital fingerprint from watermarked images, capable of both evading watermark detection and forging watermark on clean images. We successfully identify the watermark leakage using \textbf{only one watermarked image} in the \textbf{no-box setting}, meaning no queries to the target watermarking system are required. Our method significantly outperforms state-of-the-art (SOTA) approaches in both watermark removal and forgery while preserving visual fidelity and semantic integrity. To extract the watermark-related feature, we utilized a typical neural network to extracting image features, representing in the form of multiple channels. We identify critical channels that have a bias toward watermark features and optimize learnable variables to align with the watermark characteristics and the semantics of the carrier image. These trained variables can be subtracted from the watermarked image for detection evasion and added to clean imagery for spoofing attacks. However, two key challenges arise: 1) accurately identifying the channels containing watermark features; 2) effectively forging semantic watermarks, where watermark features are deeply coupled with the carrier image’s semantics. 

To address these challenges, we propose an aggregation-based locating algorithm to identify key channels automatically, then optimize the learnable variable using weight-based loss function. The resulting variable facilitates both attack goals. For semantic watermarking, we propose to bridge the semantic difference between the carrier and the target images with a new optimization paradigm, successfully forging semantic watermarks. We conduct comprehensive evaluations to validate the effectiveness of DAPAO, benchmark its performance against existing methods, and perform ablation studies to analyze the contributions of each component in our framework.

\textbf{Summary of contributions.} In this paper, we make the following contributions:

\begin{itemize}
    \item We reveal the robustness-stealthiness paradox of watermark systems: Schemes improve watermark information redundancy to boost robustness against distortions, which results in watermark feature leakage that can be leveraged by attackers.
    \item Leveraging this observation, we propose DAPAO, a novel attack framework capable of both watermark removal and forgery against SOTA robust watermarking schemes. Our method requires only a single watermarked image for extraction and operates in a no-box setting. Additionally, we introduce a new approach to overcome the challenge of forging semantic watermarks, an understudied problem.
    
    \item Extensive experimental results demonstrate that our framework achieves notable improvement in attack performance over related work, with a 60\% success rate improvement in watermark detection removal and a 51\% improvement in forgery accuracy. 
    
\end{itemize}


\section{Background}~\label{sec:background}
This section provides the necessary background for understanding our attack framework.

\begin{figure}[!t]
    \centering
    \includegraphics[width=\linewidth]{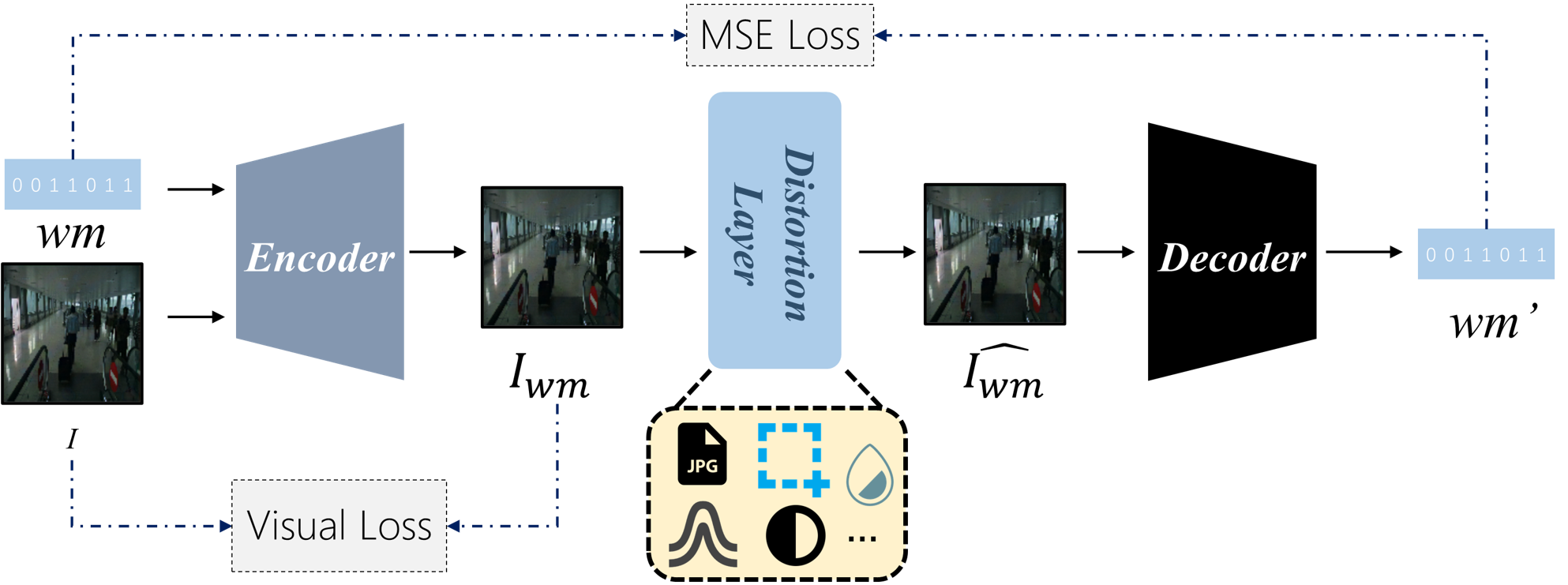} 
    
    \caption{Illustration of learning-based watermarking methods.} 
    \label{fig:watermark}
    \vspace{-3mm}
\end{figure}

\subsection{Image Watermarking}
Image watermarking includes injection, extraction, and verification. During watermark injection, an encoder $\mathcal{E}(\cdot,\cdot)$ receives the identification information $wm$ (``0011011" in Figure~\ref{fig:watermark}) and an original image $I$ as input and generates a watermarked image $I_{wm}$ with the key information embedded. During watermark extraction, the decoder $\mathcal{D}_{wm}(\cdot)$ extracts the identification key $wm'$ from the watermarked image $I_{wm}$ and then matches it 
 with $wm$ to verify whether the target watermark exists in the image.

\textbf{Non-learning-based and Learning-based Watermarking.} Non-learning-based methods build the encoder and decoder based on heuristics~\cite{jiang2023evading}. Learning-based methods deploy neural networks for the encoder and decoder, whose parameters are trained with deep learning techniques. Generally speaking, learning-based methods exhibit more robustness against distortions. In particular, they can incorporate a distortion layer before the decoder to mimic possible distortions and perform adversarial training(as shown in Figure~\ref{fig:watermark}) , causing the results of decoding a processed watermarked image $\hat{I_{wm}}$ to be identical to the one without experiencing distortions~\cite{jiang2023evading}.    

\textbf{Post-processing and In-processing Methods.} Post-processing watermarking adds a watermark to an image post its generation, following the same process of watermarking a real image~\cite{chopra2012lsb, DWT-DCT, stegastamp}. In contrast, in-processing watermarking embeds the identification message during the image generation process~\cite{YU1, yu2021responsible}. 

\subsection{Detection Evasion and Watermark Forgery}
Detection evasion means an attacker modifies a watermarked image to remove or disrupt the embedded watermark, causing the decoded bit string to deviate from the original identification information.

Watermark forgery involves extracting the watermark information $wm$ from the watermarked image $I_{wm}$ and embedding it into another non-watermarked image $I'$ to generate $I'_{wm}$, passing the verification of the watermark detector.
\section{Problem Formulation}

\begin{figure}[!t]
    \centering
    \includegraphics[width=\linewidth]{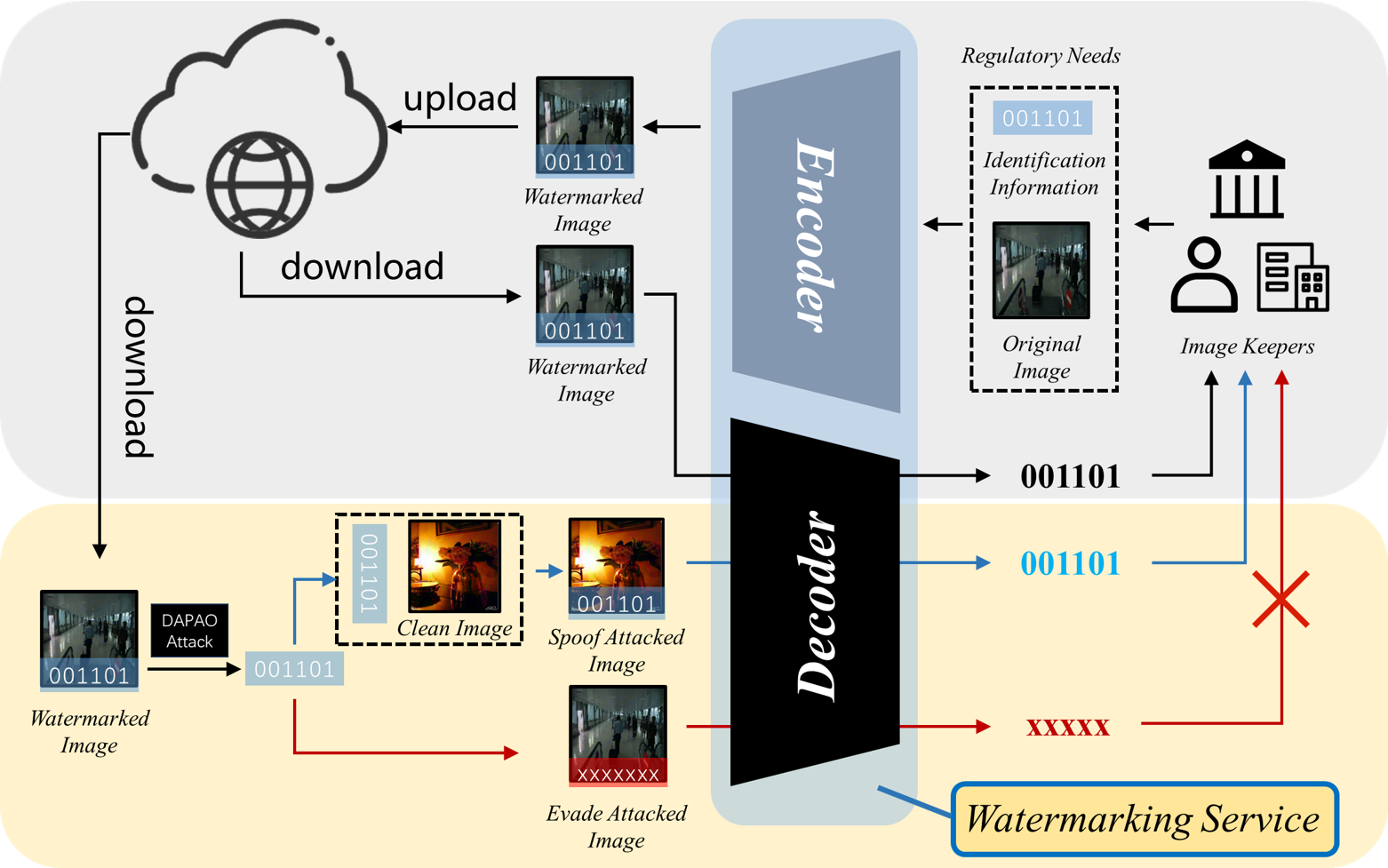} 
    
    \caption{Typical watermarking application and security threats. Organizations and individuals use watermarking services to embed watermarks into images for purposes such as copyright protection or content regulation. When image ownership verification is required, the watermark is extracted and matched through the watermarking service. However, attackers can apply carefully designed post-processing techniques to remove or forge the watermark.} 
    \label{fig:models}
\end{figure}

\subsection{System Model}
Figure~\ref{fig:models} illustrates the use case of a typical watermarking system. The process consists of the stages of watermark injection (encoding), data circulation, and watermark extraction (decoding), as shown in the gray portion of Figure~\ref{fig:models}. We primarily consider the post-processing watermarks. The three parties involved include \emph{users/organizations}, \emph{the verifier}, and the\emph{the attacker}.


\textbf{Users/service providers.} Users would like to use watermarking service before posting images online via social platforms to protect copyright. Alternatively, a service provider wants to mark all imagery generated by its own products, ensuring content provenance.

\textbf{The verifier.} To verify if an image contains the watermark, the verifier downloads target images from the Internet, decodes the image to extract watermark information, and then verifies the extracted one with the identification information. 

\subsection{Attacker's Goals} An attacker has two types of objectives. First, he would like to use an image without attributing it to the creator; therefore, he needs to evade the detection of watermarks. Second, he would like to improve the credibility of a fake image; therefore, he needs to 
forge a watermark related to an official account.

\subsection{Attacker's Capability} The attacker can download watermarked images uploaded by the victim, perform watermark removal or watermark spoofing on a clean image. Notably, we assume three realistic limitations: 1) The attacker \textbf{neither have knowledge} about the target watermarking system (i.e., encoder and decoder), \textbf{nor can he query the system}; 2) The attacker cannot obtain the original image; 3) The attacker must tackle watermark methods that are robust against distortions.

\section{DAPAO Attacks}
In this section, first, we present the feasibility study, demonstrating our observation of information leakage in robust watermarks. Next, we provide the theoretical analysis for method validation. Last, we introduce evasion and forgery attacks based on the observation.


\subsection{Feasibility Study}\label{sec:pilot}
We empirically find that learning-based robust watermarking systems counteract distortion effects (e.g., compression) by expanding the regions where the watermark pattern is embedded or amplifying its magnitude, ensuring that the watermark remains detectable. Beyond the encoding process, these systems also train the watermark decoder to enhance extraction effectiveness, effectively increasing the model's attention to watermark signals.

We conduct a feasibility study to explore: \emph{If the strengthened watermark results in leakage that can be captured from images using a feature extraction network?} We embed watermarks in multiple images with the same robust watermarking algorithm and then input these watermarked images into a feature extraction network.

\begin{figure}[!t]
    \centering
    \includegraphics[width=\linewidth]{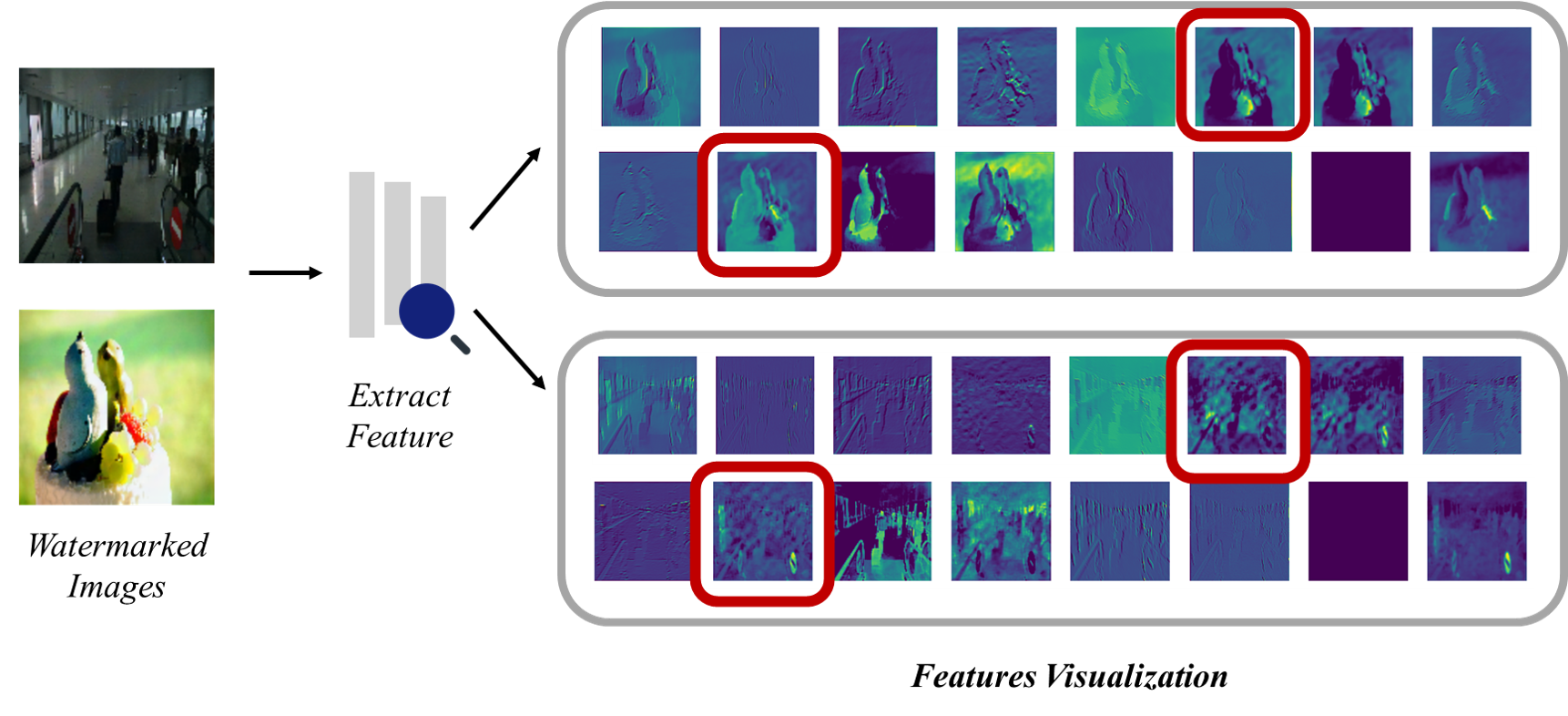}   
    \vspace{-6mm}
    \caption{Demonstration of our feasibility study.}
    \label{fig:feasibility}
    \vspace{-3mm}
\end{figure}

As shown in Figure ~\ref{fig:feasibility}, we found that:
\begin{itemize}
    \item The multi-channel features obtained after feature extraction can capture patterns not easily noticeable by the human eye.
    \item These patterns are similar across different images.
    \item Not all features contain such leakage information.
\end{itemize}

The results shed light on learning watermark characteristics from distinguished patterns probably related to the watermark.

\begin{figure}[!t]
    \centering
    \includegraphics[width=\linewidth]{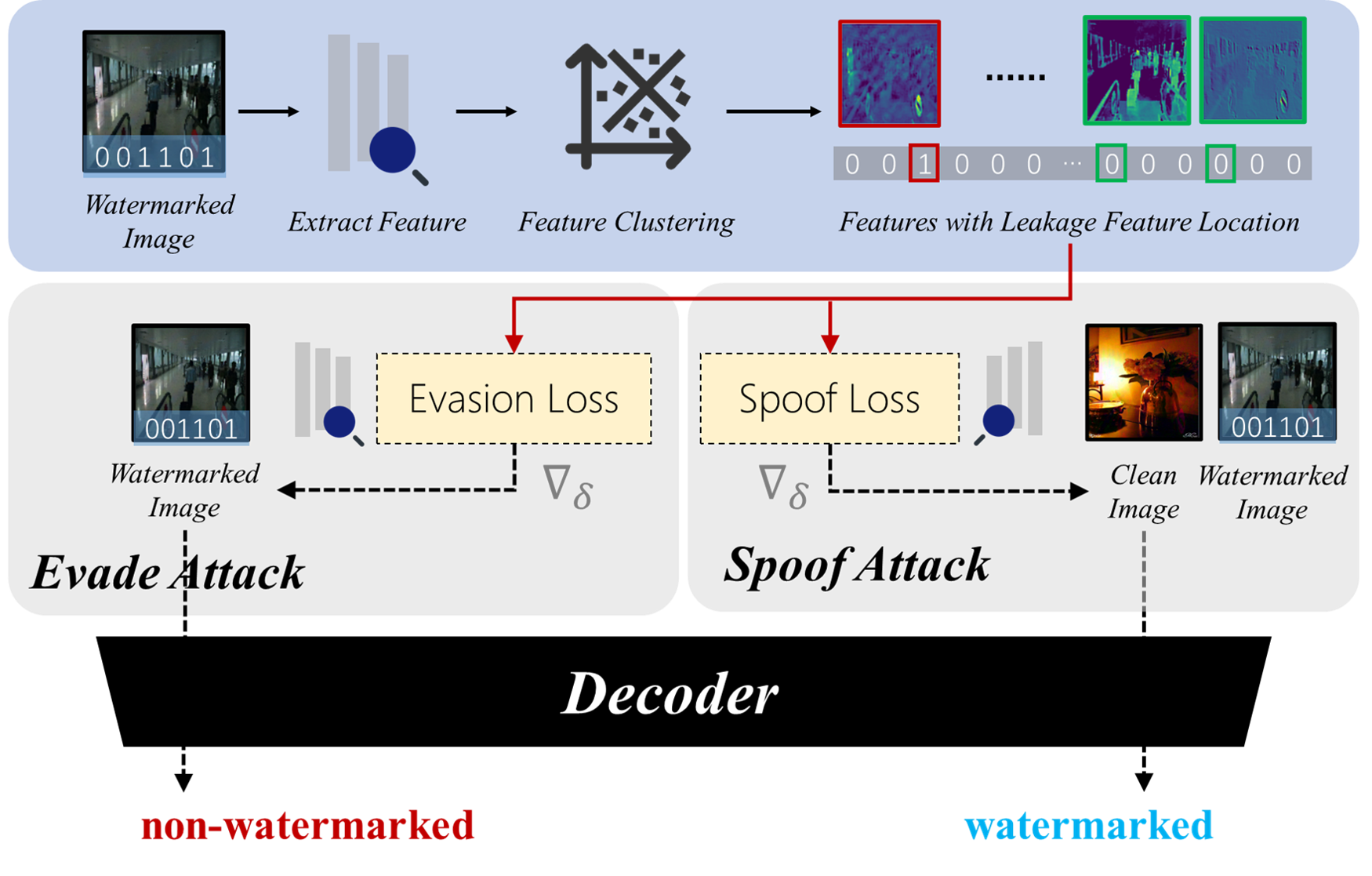} 
    
    \caption{An overview of our attack.} 
    \label{fig:method-overview}
    \vspace{-3mm}
\end{figure}

\subsection{Robustness and Invisibility Trade-off}\label{sec:Method_Theory}
As mentioned earlier Sec.~\ref{sec:background}, a complete watermarking framework can be divided into three components: encoder $\mathcal{E}$, decoder $\mathcal{D}$, and distortion layer $\mathcal{T}$. The decoder takes only a single watermarked image $I_{wm}$ as input. To achieve correct verification, the decoder must implicitly disentangle the image content from the embedded watermark information and correctly associate them to extract the watermark successfully.


\begin{definition}
An image and watermark information: $I$, $wm \ \subset \{0,1\}^k$, the encoder is:
$$\mathcal{E}(I, wm)=I+\epsilon \cdot \underbrace{\phi(I,wm)}_W$$
the decoder is:
$$\mathcal{D}(I_{wm}) \to \underbrace{(\hat{I}, \hat{W})}_{{match}} \to \hat{wm}$$

$\epsilon$ is the embedding strength.The feature space of the image $\mathcal{P} = \{p_1, p_2,...,p_n\}$ consists of two subspaces for embedding information: 
$$\mathcal{P} = \mathcal{P}_r \bigoplus \mathcal{P}_c$$

Due to joint training, the encoder exhibits a similar implicit decomposition behavior, projecting the input image $I$ into two feature spaces, named as $P_r$ and $P_c$. The former is a more suitable embedding space for information hiding, while the latter is not. 

The encoder performs this mapping $\mathcal{E}(I,wm) \to I_{wm}$ by:
$$\phi(I,wm) = \mathop{\min}_{p\in \mathcal{P}_r}||wm - \mathcal{D}(\mathcal{E}(p,wm))||^2+\lambda||\mathcal{E}(p,wm)||$$

However, as robustness requirements are introduced and continuously strengthened, the encoder must encode more information to ensure the watermark’s resistance to attacks. When the $P_r$   space is fully utilized, the encoder is forced to use $P_c$ for watermark embedding, polluting the $P_c$ space.

\end{definition}

\begin{definition}
An intuitive definition of embeddable threshold is:
\begin{gather*}
C(I) = \sup_{W \in \mathcal{P}_r}{\frac{||W||_2}{||I||_2}} \\\\
s.t. PNSR(I, I+W) \ge TV
\end{gather*}
$TV$ represents the lower bound of the visual quality.
\end{definition}

\begin{proposition}
When the robustness requirement exceeds $C(I)$, a decline in visual quality is inevitable.
\end{proposition}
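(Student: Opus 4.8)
The plan is to reduce the statement to a monotonicity argument: perceptual distortion grows with the energy of the embedded signal, while $C(I)$ is by construction the largest admissible relative energy that the robust subspace $\mathcal{P}_r$ can hold. First I would make the robustness requirement quantitative. Starting from the encoder objective $\phi(I,wm)=\min_{p\in\mathcal{P}_r}\|wm-\mathcal{D}(\mathcal{E}(p,wm))\|^2+\lambda\|\mathcal{E}(p,wm)\|$, I would argue that achieving a prescribed robustness level against the distortion layer $\mathcal{T}$ forces the embedded signal $W=\epsilon\cdot\phi(I,wm)$ to carry at least a certain relative energy; denote this required ratio by $R:=\|W\|_2/\|I\|_2$. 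Intuitively a stronger distortion erodes a larger fraction of the signal, so the decoder can recover $wm$ reliably only if the pre-distortion signal-to-image ratio is large enough, yielding a monotone map from robustness level to a lower bound $R$ on $\|W\|_2/\|I\|_2$.

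Second I would exploit the structure of the visual-quality constraint. Since PSNR is computed from the mean-squared error $\tfrac1N\|W\|_2^2$, it is a strictly decreasing function of $\|W\|_2$; hence the constraint $\mathrm{PSNR}(I,I+W)\ge TV$ is equivalent to an upper bound $\|W\|_2\le\tau(I,TV)$ on the signal energy, and therefore to $\|W\|_2/\|I\|_2\le\tau(I,TV)/\|I\|_2$. By the definition of the embeddable threshold, $C(I)=\sup_{W\in\mathcal{P}_r}\|W\|_2/\|I\|_2$ is taken over exactly the signals that meet this PSNR constraint, so $C(I)$ is precisely the largest relative energy that can be placed in $\mathcal{P}_r$ without driving PSNR below $TV$.

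Third I would run a direct case analysis under the hypothesis that the robustness requirement forces $R>C(I)$. If the encoder confines $W$ to $\mathcal{P}_r$, then meeting $R$ means $\|W\|_2/\|I\|_2=R>C(I)$, which by the previous step violates $\mathrm{PSNR}(I,I+W)\ge TV$, so visual quality declines. Otherwise the encoder spills part of the signal into $\mathcal{P}_c$, writing $W=W_r+W_c$ with $W_r\in\mathcal{P}_r$ and $W_c\in\mathcal{P}_c$; since $\mathcal{P}_r$ alone is capped at $C(I)<R$, the total energy still satisfies $\|W\|_2\ge R\|I\|_2>C(I)\|I\|_2$, so the MSE again exceeds the admissible bound and PSNR falls below $TV$. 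In both branches the visual-quality floor $TV$ is breached, which establishes the claim; the ``pollution'' of $\mathcal{P}_c$ described after the first definition is exactly the second branch.

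The main obstacle I expect is the first step: rigorously converting the abstract ``robustness requirement'' into a quantitative lower bound on $\|W\|_2/\|I\|_2$. This genuinely requires a model of how the distortion layer $\mathcal{T}$ attenuates the surviving signal together with a decoding-margin condition linking recoverability of $wm$ to the residual energy. Absent an explicit channel model, I would phrase this as a monotonicity assumption (robustness is nondecreasing in the pre-distortion relative energy $R$) and then observe that everything downstream is a clean consequence of the PSNR monotonicity of step two and the definition of $C(I)$.
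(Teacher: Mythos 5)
Your overall architecture matches the paper's: both arguments pit a robustness-induced \emph{lower} bound on the embedding energy against the PSNR-induced \emph{upper} bound encoded in the definition of $C(I)$, and conclude that when the former exceeds the latter the visual-quality floor $TV$ must be breached. Your steps two and three (PSNR monotonicity in $\|W\|_2$, and the case split between confining $W$ to $\mathcal{P}_r$ versus spilling into $\mathcal{P}_c$) are sound and in fact slightly more explicit than what the paper writes; the paper does not bother with the $\mathcal{P}_r$/$\mathcal{P}_c$ case analysis and works directly with the total energy $\epsilon\|W\|_2$.

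The genuine gap is exactly the one you flag yourself: you leave the conversion from ``robustness requirement'' to a quantitative lower bound on $\|W\|_2/\|I\|_2$ as a monotonicity assumption, which means your argument proves only that \emph{some} robustness level breaks the PSNR constraint, not that the threshold is $C(I)$ as stated. The paper closes this gap by modeling the distortion layer as an additive noise channel with noise power $\delta_\eta^2$ and invoking the Gaussian channel capacity
$$R=\tfrac{1}{2}\log\Bigl(1+\tfrac{\epsilon^2\|W\|^2}{\delta_\eta^2}\Bigr),$$
requiring $R\ge H(wm)$ for reliable decoding of the $k$-bit message, which inverts to the explicit bound $\epsilon\|W\|_2\ge\sqrt{(2^{2H(wm)}-1)\,\delta_\eta^2}$. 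Combined with $\epsilon\|W\|_2\le C(I)\|I\|_2$ from the definition of $C(I)$, infeasibility follows precisely when $\sqrt{(2^{2H(wm)}-1)\,\delta_\eta^2}>C(I)\|I\|_2$. So your instinct that ``this genuinely requires a model of how $\mathcal{T}$ attenuates the signal'' is correct: the Shannon capacity formula is that model, and without it (or an equivalent decoding-margin condition) the proposition's quantitative claim does not follow. To complete your proof you would need to either adopt the AWGN model as the paper does, or state your monotone robustness-to-energy map as an explicit hypothesis of the proposition.
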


\begin{proof}
Let the distortion layer $\mathcal{T}$ introduce noise $\eta \sim \mathcal{T}$, with the requirement that
$$||wm-\mathcal{D}(I_{wm} + \eta)|| \le \mathcal{B}$$
$\mathcal{B}$ is the bit error rate. Considering the channel capacity as:
$$R=\frac{1}{2}\log(1+\frac{\epsilon^2||W||^2}{\delta_{\eta}^2})$$
To achieve $R\ge H(wm)$, the following conditions must be met:
$$
\epsilon||W|| \le \sqrt{(2^{2H(wm)}-1)\delta_{\eta^2}}
$$
$H(wm)$ represents the entropy of $wm$. 

When $\sqrt{(2^{2H(wm)}-1)\delta_{\eta^2}} > C(I)||I||_2$, the system cannot simultaneously satisfy both, and it is necessary to increase $C(I)$, introducing visual artifacts into the image. Detailed proof is provided in Appendix~\ref{sec:Appendix_Proofs}.
\end{proof}
The artifacts introduced by sacrificing invisibility contain watermark information, creating a security vulnerability where watermark information leakage occurs.


\subsection{Detection Evasion}\label{sec:Method_Evasion Attack}
Our method is illustrated in Figure~\ref{fig:method-overview}, Suppose we have an image $I_{wm}$, embedded with an unknown watermark $wm$. This image is fed into a feature extraction module $\mathcal{F}(\cdot)$, resulting in multi-channel features $\mathcal{F}(I_{wm})$. To automate the selection of features that capture potential information leakage, we perform clustering on the multi-channel features. Among the resulting clusters, we identify the two clusters with the smallest number of samples and extract their corresponding feature channel positions $\mathcal{W}$.

To achieve the goal of an evasion attack, we need to disrupt the leaked watermark information captured from $I_{wm}$.We formulate this process as an optimization problem: finding a perturbation $\delta$ that disrupts the leaked information while preserving the visual quality of the image. The formulation is as follows:
\begin{equation}
\label{eq:1}
\begin{split}
    \mathop{\min}_{\delta}-\mathcal{L}(\mathcal{W} \cdot \mathcal{F}(I_{wm}), \mathcal{W}\cdot \mathcal{F}(I_{wm} + \delta)) \\
    \mathrm{ s.t.} ||\delta||_{\infty} < \epsilon
\end{split}
\end{equation}

where $\mathcal{L}(\cdot,\cdot)$ is the loss function that measures the distance between two features, and $\epsilon$ is a perturbation budget.

We use Projected Gradient Descent (PGD)~\cite{PGD} to solve the optimization problem in Eq~\ref{eq:1}. Finally, we complete the attack through $I_{wm} + \delta$.

Our detailed algorithm is shown as 
 Algorithm~\ref{alg:evasion algo}.


\subsection{Forgery Attack}
As shown in Figure~\ref{fig:method-overview}, we first use the feature extraction module and clustering algorithm to extract features containing leaked watermark information, from $I_{wm}$. To achieve the goal of spoofing, we still need to extract the leaked information. Therefore, this process can be formulated as the following optimization problem:
\begin{equation}
\label{eq:2}
\begin{split}
     \mathop{\min}_{\delta}-\mathcal{L}(\mathcal{W} \cdot \mathcal{F}(I_{wm}), \mathcal{W}\cdot \mathcal{F}(I_{wm} + \delta)) \\
    \mathrm{ s.t.} ||\delta||_{\infty} < \epsilon
\end{split}
\end{equation}
\vspace{-4mm}

where $\epsilon$ is a perturbation budget, and 
 this process is identical to the above evasion attack, referred to as Stage \uppercase\expandafter{\romannumeral1}.
However, the learned $\delta$ alone cannot fulfill the forgery purpose for \emph{semantic watermarking}. Based on the theory discussed earlier (See Sec.~\ref{sec:Method_Theory}), we need to consider the coupling effect between the semantics and watermark. After the optimization in Eq~\ref{eq:2} is completed, an additional optimization term should be included to further find another perturbation, $\delta_s$, which can be described as:
\begin{equation}
\label{eq:3}
\begin{split}
     \mathop{\min}_{\delta}\mathcal{L}((1-\mathcal{W}) \cdot \mathcal{F}(I_{wm}+\delta), (1-\mathcal{W})\cdot \mathcal{F}(I' + \delta_s)) \\
    \mathrm{ s.t.} ||\delta_s||_{\infty} < \epsilon
\end{split}
\end{equation}
This process is referred to as Stage \uppercase\expandafter{\romannumeral2}.
We use Projected Gradient Descent (PGD)~\cite{PGD} to solve the optimization problem in Eq~\ref{eq:2} and Eq~\ref{eq:3}.
Finally, we complete the attack through $\{I' - \delta\}$ or $\{I' - \delta + \delta_s \}$.

Our detailed algorithm is shown as Algorithm~\ref{alg:spoof algo}
\section{Evaluation}
\subsection{Setup}
\textbf{Dataset}. We use two datasets to validate our attack method, including COCO~\cite{MS-COCO} and DIV2K~\cite{Agustsson_2017_CVPR_Workshops}. We randomly selected 100 images from each dataset, and each image was embedded with random watermark information using different watermarking algorithms. Another 100 non-overlapping images were also selected from the COCO~\cite{MS-COCO} dataset to serve as clean images for the forgery attack.

\textbf{Watermark setting}. To evaluate our attack method, we test seven publicly available watermarking methods: DwtDctSvd~\cite{DWT-DCT-SVD}, DwtDct~\cite{DWT-DCT}, RivaGAN~\cite{RivaGAN}, StegaStamp~\cite{stegastamp}, HiDDeN~\cite{zhu2018hidden}, PIMoG~\cite{PIMoG}, and CIN~\cite{CIN}. 


\textbf{Attack Benchmarking}. 
For evasion attacks, we compared our approach not only with traditional image degradation techniques such as JPEG compression, Gaussian noise, and Gaussian blur but also with related attack methods, including WmAttacker~\cite{zhao2023invisible} and WmRobust~\cite{saberi2023robustness}. For forgery attacks, we compare our method with existing forgery techniques, including CopyAttack~\cite{kutter2000watermark}, Steganalysis~\cite{yang2024steganalysis}, and WmRobust~\cite{saberi2023robustness}.

\textbf{Evaluation metrics}. We evaluate the visual quality of the attacked watermarked images and the corresponding original watermarked images using two commonly used metrics: Structural Similarity Index Measure (SSIM)~\cite{wang2004image} and Peak Signal-to-Noise Ratio (PSNR). To evaluate the effectiveness of our attack method, we use bit accuracy and success rate (SR) as metrics. Bit accuracy refers to the correct matching rate between the watermark extracted from the image and its ground truth. SR represents the proportion of successfully attacked samples to the total number of samples. This varies under two attack scenarios: in an evasion attack, the attack is considered successful if the bit accuracy of the attacked image falls below a certain threshold. For a forgery attack, the opposite holds true.

\textbf{Parameter settings}. We use AdamW~\cite{loshchilov2017decoupled} as the optimizer, with a learning rate of 0.15 and a weight decay of 1e-4. We chose SSIM loss and L1 loss as the loss functions $\mathcal{L}(\cdot,\cdot)$ and employ a pre-trained DenseNet~\cite{DenseNet2017} as the feature extraction module $\mathcal{F}(\cdot)$. Regarding the detection thresholds, we uniformly set them to \{0.95, 0.9, 0.85, 0.8, 0.75, 0.7, 0.65, 0.6, 0.55\}, and also establish a threshold calculated based on watermark length $n$: A watermark will be detected if we can reject the null hypothesis $H_0$ with a p-value less than 0.05. The null hypothesis $H_0$ states that k matching bits were extracted from the watermarked image by random chance. This event has a probability of $P(X\ge k|H_0) = \sum_{i=k}^n{n\choose k}0.5^n$.

\begin{table*}[!th]
\centering
\caption{Overall evasion performance on the dataset COCO. Detailed results of full threshold candidates are shown in Appendix~\ref{sec:appendix:Evasion Attack against Related Works}.}
\label{tab:evade_coco_overall}
\resizebox{\textwidth}{!}{
\begin{tabular}{@{}cccc|ccc|ccc|ccc|ccc|ccc|ccc@{}}
\toprule
\multirow{2}{*}{\textbf{Methods}}   &\multicolumn{3}{c}{\textbf{PIMoG(th=0.6)}}  &\multicolumn{3}{c}{\textbf{HiDDeN(th=0.6)}}
&\multicolumn{3}{c}{\textbf{StegaStamp(th=0.57)}}
&\multicolumn{3}{c}{\textbf{DwtDct(th=0.625)}}
&\multicolumn{3}{c}{\textbf{DwtDctSvd(th=0.625)}}
&\multicolumn{3}{c}{\textbf{RivaGan(th=0.625)}}
&\multicolumn{3}{c}{\textbf{CIN(th=0.6)}}\\ \cmidrule(l){2-22} 
             & \textbf{SR$\uparrow$} & \textbf{SSIM$\uparrow$} & \textbf{PSNR$\uparrow$} & \textbf{SR$\uparrow$} & \textbf{SSIM$\uparrow$} & \textbf{PSNR$\uparrow$} & \textbf{SR$\uparrow$} & \textbf{SSIM$\uparrow$} & \textbf{PSNR$\uparrow$} & \textbf{SR$\uparrow$} & \textbf{SSIM$\uparrow$} & \textbf{PSNR$\uparrow$} & \textbf{SR$\uparrow$} & \textbf{SSIM$\uparrow$} & \textbf{PSNR$\uparrow$} & \textbf{SR$\uparrow$} & \textbf{SSIM$\uparrow$} & \textbf{PSNR$\uparrow$} & \textbf{SR$\uparrow$} & \textbf{SSIM$\uparrow$} & \textbf{PSNR$\uparrow$}  \\ \midrule
WmAttacker & 0 & 0.749 & 28.503 & 0.32 & 0.628 & 25.026 & 0 & 0.715 & 27.586 & 0.91 & 0.598 & 24.744 & 0.41 & 0.57 & 23.498 & 0.13 & 0.601 & 24.555 & 0.01 & 0.614 & 28.062 \\
WmRobust & 0.02 & \textbf{0.948} & \textbf{40.195} & 0.34 & 0.808 & 35.363 & 0.19 & \textbf{0.899} & \textbf{38.003} & 0.93 & 0.815 & \textbf{34.325} & 0.6 & 0.822 & \textbf{34.034} & 0.47 & 0.842 & 35.648 & 0.38 & 0.891 & 38.933\\
JPEG & 0 & 0.897 & 37.538 & 0.57 & 0.813 & 34.161 & 0 & 0.867 & 35.976 & 0.93 & 0.785 & 32.722 & 0.87 & 0.794 & 32.512 & 0.15 & 0.793 & 33.289 & 0.02 & \textbf{0.898} & \textbf{39.238}\\
Gaussian & 0.01 & 0.197 & 26.532 & \textbf{0.81} & 0.406 & 26.441 & 0 & 0.365 & 26.496 & \textbf{0.96} & 0.422 & 26.464 & 0.45 & 0.393 & 26.444 & 0.01 & 0.157 & 26.579 & 0 & 0.324 & 26.447\\
GaussianBlur & 0 & 0.771 & 31.362 & 0.05 & 0.651 & 30.567 & 0 & 0.713 & 30.123 & 0.85 & 0.603 & 28.435 & 0 & 0.61 & 28.378 & 0 & 0.615 & 28.433 & 0 & 0.761 & 31.495\\
\textbf{Ours} & \textbf{0.87} & 0.876 & 36.403 & 0.77 & \textbf{0.889} & \textbf{36.644} & \textbf{1} & 0.895 & 36.703 & \textbf{0.96} & \textbf{0.838} & 33.834 & \textbf{0.95} & \textbf{0.85} & 33.841 & \textbf{1} & \textbf{0.872} & \textbf{35.851} & \textbf{0.78} & 0.809 & 34.801\\
\bottomrule
\end{tabular}}
\vspace{-3mm}
\end{table*}

\begin{table*}[!th]
\centering
\caption{Overall evasion performance on the dataset DIV2K. Detailed results of full threshold candidates are shown in Appendix~\ref{sec:appendix:Evasion Attack against Related Works}.}
\label{tab:evade_div2k_overall}
\resizebox{\textwidth}{!}{
\begin{tabular}{@{}cccc|ccc|ccc|ccc|ccc|ccc|ccc@{}}
\toprule
\multirow{2}{*}{\textbf{Methods}}   &\multicolumn{3}{c}{\textbf{PIMoG(th=0.6)}}  &\multicolumn{3}{c}{\textbf{HiDDeN(th=0.6)}}
&\multicolumn{3}{c}{\textbf{StegaStamp(th=0.57)}}
&\multicolumn{3}{c}{\textbf{DwtDct(th=0.625)}}
&\multicolumn{3}{c}{\textbf{DwtDctSvd(th=0.625)}}
&\multicolumn{3}{c}{\textbf{RivaGan(th=0.625)}}
&\multicolumn{3}{c}{\textbf{CIN(th=0.6)}}\\ \cmidrule(l){2-22} 
             & \textbf{SR$\uparrow$} & \textbf{SSIM$\uparrow$} & \textbf{PSNR$\uparrow$} & \textbf{SR$\uparrow$} & \textbf{SSIM$\uparrow$} & \textbf{PSNR$\uparrow$} & \textbf{SR$\uparrow$} & \textbf{SSIM$\uparrow$} & \textbf{PSNR$\uparrow$} & \textbf{SR$\uparrow$} & \textbf{SSIM$\uparrow$} & \textbf{PSNR$\uparrow$} & \textbf{SR$\uparrow$} & \textbf{SSIM$\uparrow$} & \textbf{PSNR$\uparrow$} & \textbf{SR$\uparrow$} & \textbf{SSIM$\uparrow$} & \textbf{PSNR$\uparrow$} & \textbf{SR$\uparrow$} & \textbf{SSIM$\uparrow$} & \textbf{PSNR$\uparrow$}  \\ \midrule
WmAttacker & 0 & 0.739 & 26.853 & 0.34 & 0.64 & 23.954 & 0 & 0.71 & 26.2 & 0.89 & 0.553 & 21.427 & 0.38 & 0.527 & 20.62 & 0.15 & 0.536 & 19.978 & 0.01 & 0.644 & 23.031\\
WmRobust & 0.01 & \textbf{0.942} & \textbf{40.128} & 0.2 & 0.814 & \textbf{34.525} & 0.16 & \textbf{0.912} & \textbf{37.374} & 0.89 & 0.821 & 31.903 & 0.48 & 0.827 & 31.676 & 0.35 & \textbf{0.848} & \textbf{32.769} & 0.18 & \textbf{0.891} & \textbf{38.295}\\
JPEG & 0 & 0.897 & 36.37 & 0.42 & 0.822 & 33.522 & 0 & 0.867 & 35.976 & 0.89 & 0.766 & 29.993 & 0.77 & 0.774 & 29.897 & 0.12 & 0.771 & 29.76 & 0.01 & 0.85 & 35.727\\
Gaussian & 0 & 0.399 & 26.477 & 0.72 & 0.444 & 26.509 & 0 & 0.365 & 26.496 & 0.9 & 0.516 & 26.522 & 0.44 & 0.502 & 26.561 & 0.01 & 0.578 & 26.954 & 0 & 0.362 & 26.479\\
GaussianBlur & 0 & 0.725 & 30.154 & 0.05 & 0.62 & 27.565 & 0 & 0.713 & 30.123 & \textbf{0.91} & 0.505 & 25.97 & 0.03 & 0.368 & 22.169 & 0.01 & 0.526 & 24.617 & 0 & 0.735 & 30.952\\
\textbf{Ours} & \textbf{1} & 0.828 & 33.042 & \textbf{0.96} & \textbf{0.841} & 33.257 & \textbf{1} & 0.853 & 33.54 & 0.89 & \textbf{0.855} & \textbf{32.841} & \textbf{0.85} & \textbf{0.859} & \textbf{32.745}
 & \textbf{1} & 0.841 & 32.656 & \textbf{0.45} & 0.811 & 34.744\\
\bottomrule
\end{tabular}}
\vspace{-3mm}
\end{table*}

\begin{table*}[!th]
\centering
\caption{Overall spoofing performance on the dataset COCO. Detailed results of full threshold candidates are shown in Appendix~\ref{sec:Appendix_Additional Experimental Results_Spoof Attack}.}
\setlength{\tabcolsep}{13pt}
\label{tab:spoof_coco_overall}
\resizebox{\textwidth}{!}{
\begin{tabular}{@{}cccc|ccc|ccc|ccc|ccc@{}}
\toprule
\multirow{2}{*}{\textbf{Methods}}   &\multicolumn{3}{c}{\textbf{PIMoG(th=0.6)}}  &\multicolumn{3}{c}{\textbf{HiDDeN(th=0.6)}}
&\multicolumn{3}{c}{\textbf{StegaStamp(th=0.57)}}
&\multicolumn{3}{c}{\textbf{RivaGan(th=0.625)}}
&\multicolumn{3}{c}{\textbf{CIN(th=0.6)}}\\ \cmidrule(l){2-16} 
             & \textbf{SR$\uparrow$} & \textbf{SSIM$\uparrow$} & \textbf{PSNR$\uparrow$} & \textbf{SR$\uparrow$} & \textbf{SSIM$\uparrow$} & \textbf{PSNR$\uparrow$} & \textbf{SR$\uparrow$} & \textbf{SSIM$\uparrow$} & \textbf{PSNR$\uparrow$} & \textbf{SR$\uparrow$} & \textbf{SSIM$\uparrow$} & \textbf{PSNR$\uparrow$} & \textbf{SR$\uparrow$} & \textbf{SSIM$\uparrow$} & \textbf{PSNR$\uparrow$}  \\ \midrule
CopyAttack & 0.13 & 0.729 & 20.786 & 0.16 & 0.78 & 21.682 & 0.09 & 0.732 & 20.296 & 0.06 & 0.704 & 20.471 & 0.1 & 0.724 & 19.795\\
Steganalysis & 0.1 & 0.907 & \textbf{34.524} & 0.05 & \textbf{0.902} & \textbf{34.451} & 0.08 & \textbf{0.923} & \textbf{34.42} & 0.07 & \textbf{0.933} & \textbf{34.483} & 0.18 & \textbf{0.919} & \textbf{34.572}\\
WmRobust & 0.86 & \textbf{0.915} & 31.129 & 0.53 & 0.726 & 26.387 & 0.92 & 0.832 & 29.731 & 0.08 & 0.795 & 31.841 & \textbf{1} & 0.82 & 28.529\\
\textbf{Ours} & \textbf{1} & 0.809 & 33.485 & \textbf{0.91} & 0.704 & 33.137 & \textbf{1} & 0.827 & 34.01 & \textbf{0.18} & 0.683 & 33.312 & \textbf{1} & 0.807 & 33.716\\
\bottomrule
\end{tabular}}
\vspace{-3mm}
\end{table*}

\begin{figure}[!t]
    \centering
    \includegraphics[width=\linewidth]{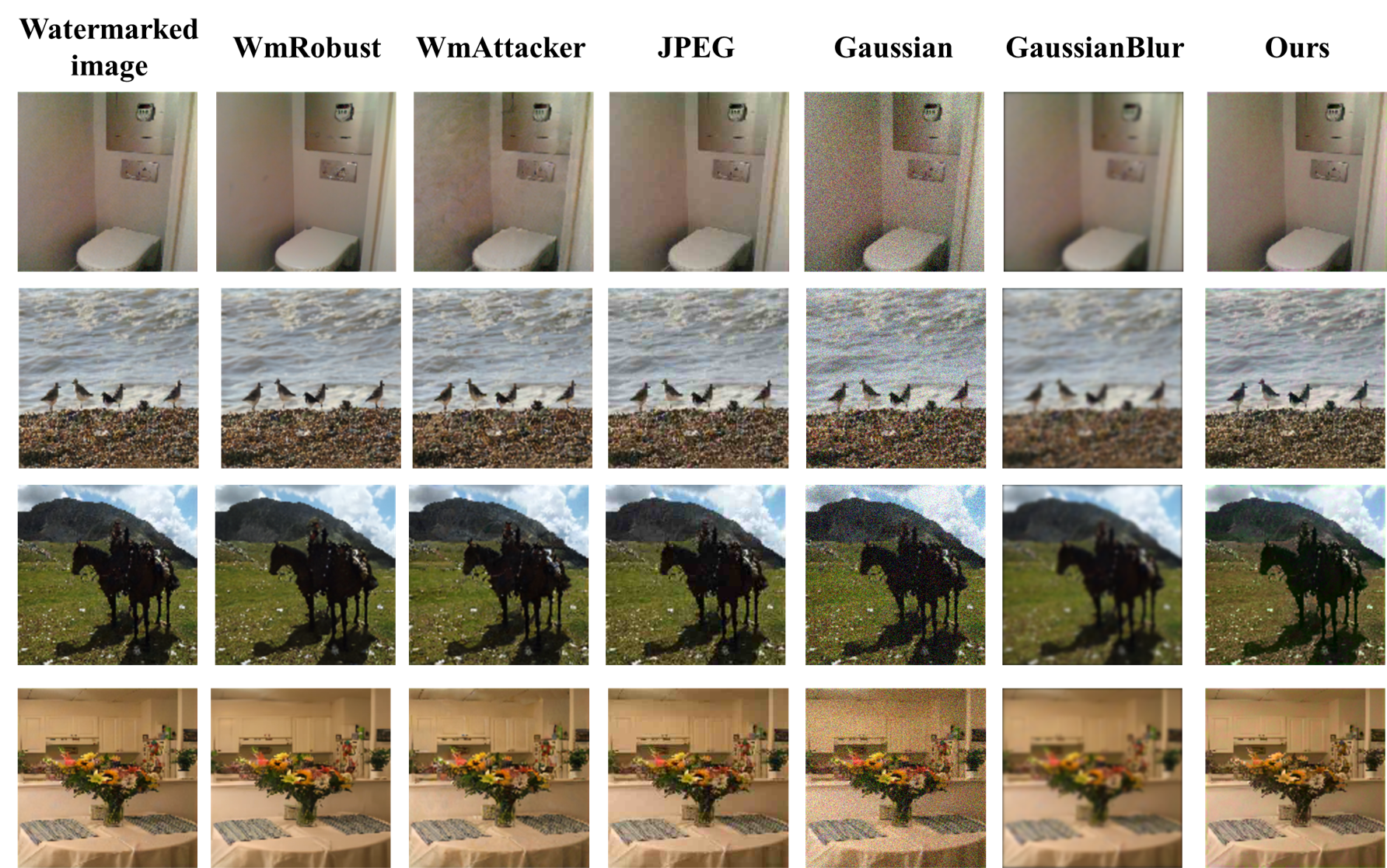} 
    
    \caption{Examples of watermark removal via our evasion attack on PIMoG}
    \label{fig:evasion-pimog}
    \vspace{-3mm}
\end{figure}

\begin{figure}[!t]
    \centering
    \includegraphics[width=\linewidth]{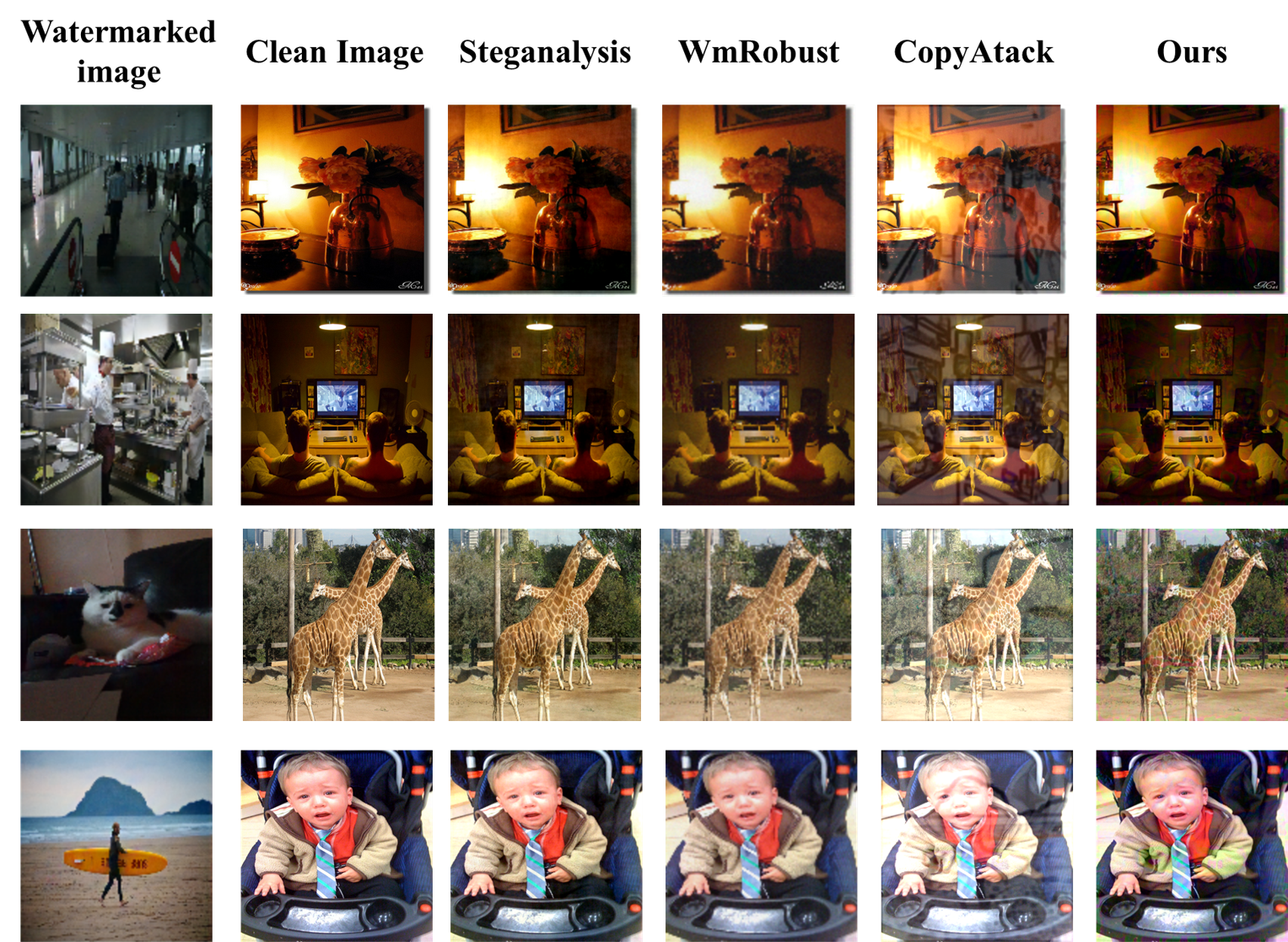} 
    \vspace{-3mm}
    \caption{Examples of watermark spoofing via our forgery attack on PIMoG.}
    \label{fig:spoof-pimog}
    \vspace{-3mm}
\end{figure}

\subsection{Results and Analysis}
In this section, we present the detailed results of our attacks and provide an analysis of the relevant findings. More detailed experimental results can be found in Appendix ~\ref{sec:Appendix_Additional Experimental Results}.

\textbf{Evasion Attack}. Tables~\ref{tab:evade_coco_overall} and Table~\ref{tab:evade_div2k_overall} summarize the comparison of our method, related methods, and image degradation factors regarding success rate and visual fidelity metrics. Our method achieves optimal or suboptimal attack success rates across different datasets and algorithms while maintaining high visual quality. Overall, our method achieves an average success rate improvement of 60\% on the COCO dataset and 61\% on the DIV2K dataset compared to other methods. Figure~\ref{fig:evasion-pimog} shows some examples of the evasion attack.

\textbf{Forgery Attack}. Table~\ref{tab:spoof_coco_overall} presents a performance comparison between our method and related forgery methods. Our method achieves the highest forgery success rate while preserving visual fidelity. Compared to related forgery methods, our method achieves an average success rate improvement of 51\%. We found that WmRobust~\cite{saberi2023robustness} also performs well when targeting certain watermarking models. However, WmRobust requires access to the target watermarking algorithm's encoder, whereas our method does not. Figure~\ref{fig:spoof-pimog} shows some examples of the forgery attack.

\subsection{Ablation Study}
\textbf{Ablation study for evasion attack}. Table~\ref{tab:evade_coco_ablation} reports the results of our evasion attack and the results of ablating different parts of the method, validating the effectiveness of our design. The experiments are divided into two parts: feature extraction network and feature channel position retrieval. \textbf{w/o $\mathcal{F}$} indicates the use of the original image without feature extraction, while \textbf{w/o $\mathcal{W}$} indicates the use of all channels without selective retrieval. 

Using only the feature extraction module $\mathcal{F}$ without channel position retrieval achieves strong attacks but severely degrades image quality. Optimizing directly on the original image worsens visual perception and reduces attack effectiveness. These results confirm that watermark leakage can be captured via feature extraction, and precise localization can minimize image distortion.

\textbf{Ablation study for forgery attack}. Table~\ref{tab:spoof_coco_ablation} presents the ablation results of our forgery attack, divided into three parts: Stage \uppercase\expandafter{\romannumeral1}, Only Stage \uppercase\expandafter{\romannumeral2} and Stage \uppercase\expandafter{\romannumeral1} +  Stage \uppercase\expandafter{\romannumeral2}. Note that Only Stage \uppercase\expandafter{\romannumeral2} is shown as follows:

\begin{equation}
\label{eq:process-2}
\begin{split}
     \mathop{\min}_{\delta_s}\mathcal{L}((1-\mathcal{W}) \cdot \mathcal{F}(I_{wm}), (1-\mathcal{W})\cdot \mathcal{F}(I' + \delta_s)) \\
    \mathrm{ s.t.} ||\delta_s||_{\infty} < \epsilon
\end{split}
\end{equation}

Experimental results show that we successfully forge watermarks of three algorithms—PIMoG, StegaStamp, and CIN, achieving optimal performance using Stage \uppercase\expandafter{\romannumeral1}, where the attack merely extracts leaked watermark information and embeds it into a clean image. This indicates that these schemes do not enforce a strong alignment between watermark information and image content, making them highly vulnerable to forgery attacks.

However, for RivaGan and HiDDeN, the Stage \uppercase\expandafter{\romannumeral1} forgery attack performs poorly, while combining Stage \uppercase\expandafter{\romannumeral1} and Stage \uppercase\expandafter{\romannumeral2} obviously improves the forgery effect, suggesting that RivaGan and HiDDeN facilitate a more substantial alignment between image semantics and watermark information.

Notably, all algorithms achieve satisfactory forgery results through Only Stage \uppercase\expandafter{\romannumeral2}. We conjecture two reasons for this: 1) Some leakage information of watermark can be identified from the less-watermark-related channels with Stage \uppercase\expandafter{\romannumeral2}; 2) Stage \uppercase\expandafter{\romannumeral2} considers more semantic information from watermarked images, improving the alignment between watermark information and image semantics. The results demonstrate the effectiveness of our method design and also support the theory proposed in Sec.~\ref{sec:Method_Theory}.

\begin{table*}[!th]
\centering
\caption{Ablation study for evasion attack. Detailed results of full threshold candidates are shown in Appendix~\ref{sec:appendix:Ablation_Study_for_Evasion_Attack}.}
\label{tab:evade_coco_ablation}
\resizebox{\textwidth}{!}{
\begin{tabular}{@{}cccc|ccc|ccc|ccc|ccc|ccc|ccc@{}}
\toprule
\multirow{2}{*}{\textbf{Methods}}   &\multicolumn{3}{c}{\textbf{PIMoG(th=0.6)}}  &\multicolumn{3}{c}{\textbf{HiDDeN(th=0.6)}}
&\multicolumn{3}{c}{\textbf{StegaStamp(th=0.57)}}
&\multicolumn{3}{c}{\textbf{DwtDct(th=0.625)}}
&\multicolumn{3}{c}{\textbf{DwtDctSvd(th=0.625)}}
&\multicolumn{3}{c}{\textbf{RivaGan(th=0.625)}}
&\multicolumn{3}{c}{\textbf{CIN(th=0.6)}}\\ \cmidrule(l){2-22} 
             & \textbf{SR$\uparrow$} & \textbf{SSIM$\uparrow$} & \textbf{PSNR$\uparrow$} & \textbf{SR$\uparrow$} & \textbf{SSIM$\uparrow$} & \textbf{PSNR$\uparrow$} & \textbf{SR$\uparrow$} & \textbf{SSIM$\uparrow$} & \textbf{PSNR$\uparrow$} & \textbf{SR$\uparrow$} & \textbf{SSIM$\uparrow$} & \textbf{PSNR$\uparrow$} & \textbf{SR$\uparrow$} & \textbf{SSIM$\uparrow$} & \textbf{PSNR$\uparrow$} & \textbf{SR$\uparrow$} & \textbf{SSIM$\uparrow$} & \textbf{PSNR$\uparrow$} & \textbf{SR$\uparrow$} & \textbf{SSIM$\uparrow$} & \textbf{PSNR$\uparrow$}  \\ \midrule
$\mathcal{W}$, $\mathcal{F}$ & 0.87 & \textbf{0.876} & \textbf{36.403} & 0.77 & \textbf{0.889} & \textbf{36.644} & \textbf{1} & \textbf{0.895} & \textbf{36.703}
& \textbf{0.96} & \textbf{0.838} & \textbf{33.834} & \textbf{0.95} & \textbf{0.85} & \textbf{33.841} & \textbf{1} & \textbf{0.872}	& \textbf{35.851} & 0.78	& \textbf{0.809}	& \textbf{34.801} \\
w/o $\mathcal{W}$, $\mathcal{F}$ & \textbf{1} & 0.331 & 28.591 & \textbf{0.86} & 0.324 & 27.93 & \textbf{1} & 0.345 & 28.41 & 0.91 & 0.386 & 27.873 & 0.93 & 0.382 & 27.873 & \textbf{1} & 0.373 & 27.829 & 0.89 & 0.314 & 28.789 \\
w/o $\mathcal{W}$, w/o $\mathcal{F}$ & 0 & 0.184 & 27.605 & 0.52 & 0.163 & 27.644 & 0 & 0.195 & 27.615 & 0.62 & 0.245 & 27.507 & 0.32 & 0.125 & 27.996 & 0.3 & 0.118 & 27.973 & \textbf{1} & 0.142 & 27.762\\
\bottomrule
\end{tabular}}
\vspace{-4mm}
\end{table*}

\begin{table*}[!th]
\centering
\caption{Ablation study for spoofing attack. Detailed results of full threshold candidates are shown in Appendix~\ref{sec:appendix:Ablation_Study_for_Spoof_Attack}.}
\setlength{\tabcolsep}{13pt}
\label{tab:spoof_coco_ablation}
\resizebox{\textwidth}{!}{
\begin{tabular}{@{}cccc|ccc|ccc|ccc|ccc@{}}
\toprule
\multirow{2}{*}{\textbf{Methods}}   &\multicolumn{3}{c}{\textbf{PIMoG(th=0.6)}}  &\multicolumn{3}{c}{\textbf{HiDDeN(th=0.6)}}
&\multicolumn{3}{c}{\textbf{StegaStamp(th=0.57)}}
&\multicolumn{3}{c}{\textbf{RivaGan(th=0.625)}}
&\multicolumn{3}{c}{\textbf{CIN(th=0.6)}}\\ \cmidrule(l){2-16} 
             & \textbf{SR$\uparrow$} & \textbf{SSIM$\uparrow$} & \textbf{PSNR$\uparrow$} & \textbf{SR$\uparrow$} & \textbf{SSIM$\uparrow$} & \textbf{PSNR$\uparrow$} & \textbf{SR$\uparrow$} & \textbf{SSIM$\uparrow$} & \textbf{PSNR$\uparrow$} & \textbf{SR$\uparrow$} & \textbf{SSIM$\uparrow$} & \textbf{PSNR$\uparrow$} & \textbf{SR$\uparrow$} & \textbf{SSIM$\uparrow$} & \textbf{PSNR$\uparrow$}  \\ \midrule
Stage \uppercase\expandafter{\romannumeral1} & \textbf{1} & \textbf{0.809} & \textbf{33.485} & 0.15 & 0.659 & 28.375 & \textbf{1} & \textbf{0.827} & \textbf{34.01} & 0.12 & \textbf{0.683} & 28.403 & \textbf{1} & \textbf{0.807} & \textbf{33.716}\\
Only Stage \uppercase\expandafter{\romannumeral2} & \textbf{1} & 0.702 & 32.194 & \textbf{0.93} & 0.668 & 31.981 & 0.98 & 0.701 & 32.172 & \textbf{0.26} & 0.674 & 32.174 & \textbf{1} & 0.701 & 32.151\\
Stage \uppercase\expandafter{\romannumeral1} + Stage \uppercase\expandafter{\romannumeral2} & \textbf{1} & 0.684 & 31.818 & 0.91 & \textbf{0.704} & \textbf{33.137} & 0.99 & 0.685 & 31.779 & 0.18 & \textbf{0.683} & \textbf{33.312} & \textbf{1} & 0.679 & 31.769
\\
\bottomrule
\end{tabular}}
\vspace{-3mm}
\end{table*}
\section{Related Work}
\subsection{Image Watermarking Methods}
\emph{Non-learning-based watermarking methods} have developed for decades. Invisible-watermark~\cite{invisible-watermark}, a representative method deployed by Stable Diffusion, encodes watermark into frequency sub-bands. \emph{Learning-based watermarking methods} are gaining dominance due to their superior performance. Zhu et al. \cite{zhu2018hidden} propose the first end-to-end learning architecture for robust watermarking. Following this trend, a series of studies continue to enhance robustness against real-world interferences~\cite{stegastamp, Liu2019TwoStage}. 




\subsection{Detection Evasion Attacks}
\textbf{Destruction and Reconstruction}. The watermarked image firstly undergoes a certain level of degradation, followed by reconstruction to obtain a purified image. The mainstream approach for this method involves adding noise to the image and then using generative models, such as Diffusion Models (DM)~\cite{ho2020denoising}, for reconstruction and generation~\cite{an2024benchmarking, saberi2023robustness, zhao2023invisible}. In contrast, UnMarker ~\cite{Kassis2024Unmarker} employs a learnable filter to process the watermarked image, supplemented by visual loss functions to ensure and enhance the visual quality of the attack results. 

\textbf{Adversarial Attacks}. Transferring classic adversarial attack methods to the watermarking domain primarily targets the decoder of watermark models. WEVADE~\cite{jiang2023evading} incorporates both black-box and white-box adversarial attack methods. Lukas et al.~\cite{lukas2024leveraging} employ a surrogate model closely resembling the target model to perform transfer attacks. WmRobust~\cite{saberi2023robustness} requires a dataset containing both watermarked and non-watermarked images to train a feature classifier subjected to adversarial attacks. The attacks are transferred to the target watermark detection module. Similarly, WAVES~\cite{an2024benchmarking} relies on a relevant watermark dataset for surrogate attacks but introduces a more detailed classification of watermark data. Hu et al.~\cite{hu2024transfer} explore the feasibility of large-scale ensemble surrogate models for transfer attacks against target watermark models.

\subsection{Watermark Forgery Attacks}
CopyAttack~\cite{kutter2000watermark} was the first to introduce the concept of spoof attacks and proposed a method for predicting watermarks in unknown watermarking algorithm scenarios, embedding them into other images to achieve forgery. WmRobust~\cite{saberi2023robustness} proposes an attack leveraging the encoder of the watermark model to embed noise with a watermark and applies fine-tuning to generate forged watermarks. Steganalysis~\cite{yang2024steganalysis} computes a residual by statistically analyzing a dataset of watermarked images and unpaired clean images. This residual is then used to facilitate watermark forgery.

\section{Discussion and Limitations}
\subsection{Effectiveness against In-processing Watermarks}

In-processing watermarks usually demonstrate a stronger coupling effect with the semantics of imagery, probably presenting insufficient pattern leakage for our method to utilize. For example, Tree-Ring~\cite{wen2023tree} subtly influences the entire image generation process by embedding a pattern structured in the Fourier domain into the noise vector for sampling. This watermarking significantly connects with the image content, and our attacks exhibit limited performance on it.

While in-processing watermarks present challenges for our method, this watermarking approach has a limited range of applications. It necessitates significant modifications to existing image synthesis algorithms and cannot be applied to watermarking real images.


\subsection{Defenses against DAPAO}
We urgently need to develop defenses for post-processing watermarks against our attacks, considering these methods have wider application scenarios. A promising direction is to optimize the adversarial training (described in Sec.~\ref{sec:background}) with the inclusion of leakage estimation, striking a balance between robustness and security. 



\section{Conclusion}
We reveal a tradeoff in robust watermarks: Improved redundancy of watermark information enhances robustness, but increased redundancy raises the risk of watermark leakage. We propose DAPAO attack, a framework that requires only one image for watermark extraction, effectively achieving both watermark removal and spoofing attacks against cutting-edge robust watermarking methods. Our attack reaches an average success rate of 87\% in detection evasion (about 60\% higher than existing evasion attacks) and an average success rate of 85\% in forgery (approximately 51\% higher than current forgery studies). 
\section*{Impact Statement}

Watermarking is an avenue for AIGC provenance and detection, preventing potential misbehavior such as the spread of misinformation, copyright violation, and adversarial false attribution. Our work primarily underscores novel threats to modern learning-based watermarking schemes prioritizing robustness against real-world distortions. In theory, attackers could exploit these vulnerabilities to compromise watermarks, potentially harming users and service providers. However, the watermarking methods analyzed in this study are all open-source and research-focused, while the real-world deployment of invisible and robust watermarks remains in its early stages. Therefore, we believe making our work public has no direct negative impact. Conversely, we believe our findings have a positive societal impact by exposing a fundamental vulnerability in existing robust watermarking techniques, thereby preventing potential covert exploitation by adversaries and offering valuable insights for developing more secure image watermarking solutions.


\nocite{langley00}

\bibliography{reference}
\bibliographystyle{dapao}

\newpage
\appendix
\onecolumn
\section{Additional Experimental Results}\label{sec:Appendix_Additional Experimental Results}
\subsection{Evasion Attack against Related Works}
\label{sec:appendix:Evasion Attack against Related Works}
We provide detailed evasion attack results as shown in ~\cref{fig:evasion-related-pimog-psnr} - ~\cref{fig:evasion-related-hidden-ssim} on COCO dataset and shown in ~\cref{fig:evasion-related-pimog-div2k-psnr}-~\cref{fig:evasion-related-cin-div2k-ssim} on DIV2K dataset, including the attack success rates and visual metrics corresponding to all watermark detection thresholds. Additionally, we provide examples of attack results against other algorithms, as shown in ~\cref{fig:evasion-hidden,fig:evasion-pimog}.

\begin{figure}[!t]
\begin{minipage}{0.48\linewidth}
    \centering
    \includegraphics[width=\linewidth]{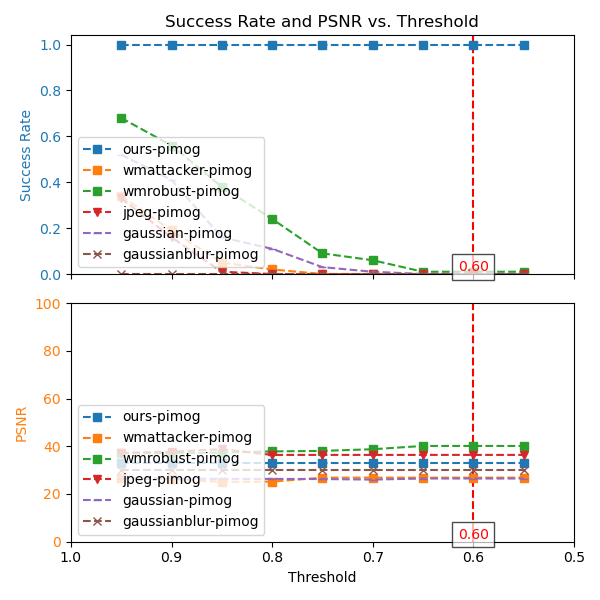} 
    \vspace{-6mm}
    \caption{The detailed success rate of PIMoG evasion attacks and the corresponding PSNR visual metric on DIV2K dataset.}
    \label{fig:evasion-related-pimog-div2k-psnr}
\end{minipage}\hfill
\begin{minipage}{0.48\linewidth}
    \centering
    \includegraphics[width=\linewidth]{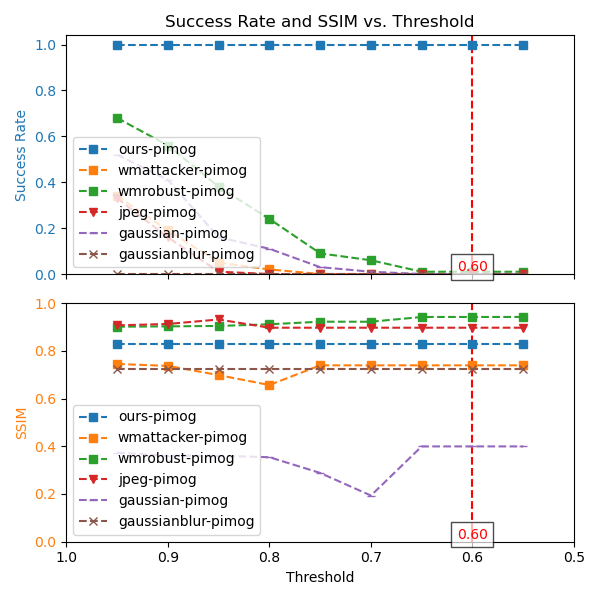}
    \vspace{-3mm}
    \caption{The detailed success rate of PIMoG evasion attacks and the corresponding SSIM visual metric on DIV2K dataset.}
    \label{fig:evasion-related-pimog-div2k-ssim}
\end{minipage}
\end{figure}

\begin{figure}[!t]
\begin{minipage}{0.48\linewidth}
    \centering
    \includegraphics[width=\linewidth]{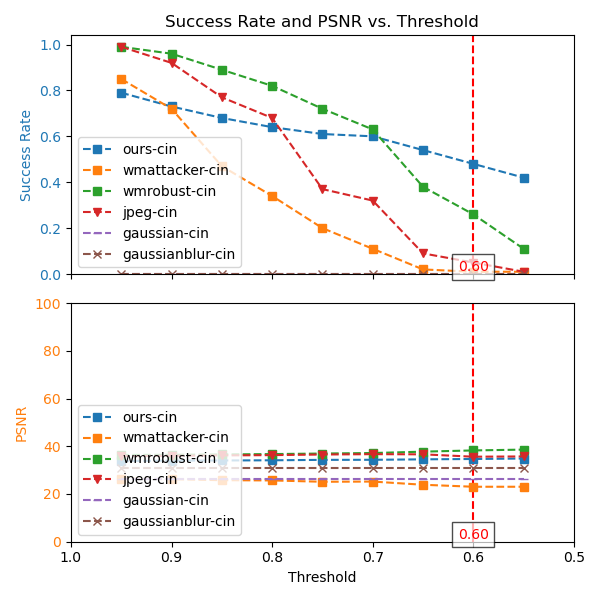} 
    \vspace{-6mm}
    \caption{The detailed success rate of CIN evasion attacks and the corresponding PSNR visual metric on DIV2K dataset.}
    \label{fig:evasion-related-cin-div2k-psnr}
\end{minipage}\hfill
\begin{minipage}{0.48\linewidth}
    \centering
    \includegraphics[width=\linewidth]{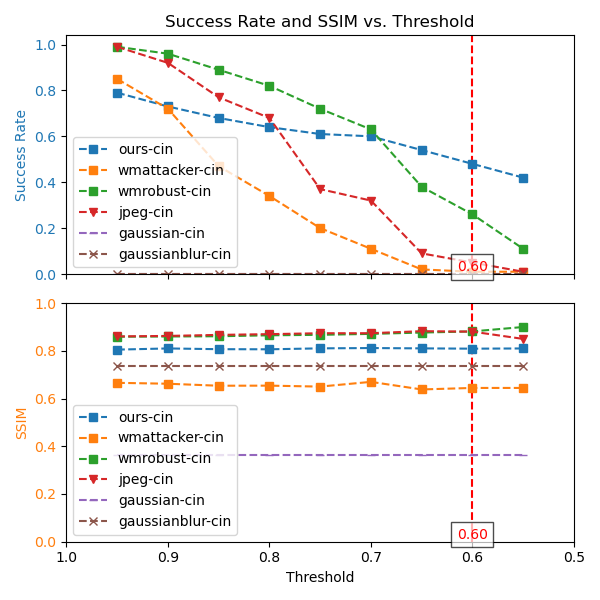}
    \vspace{-3mm}
    \caption{The detailed success rate of CIN evasion attacks and the corresponding SSIM visual metric on DIV2K dataset.}
    \label{fig:evasion-related-cin-div2k-ssim}
\end{minipage}
\end{figure}

\begin{figure}[!t]
\begin{minipage}{0.48\linewidth}
    \centering
    \includegraphics[width=\linewidth]{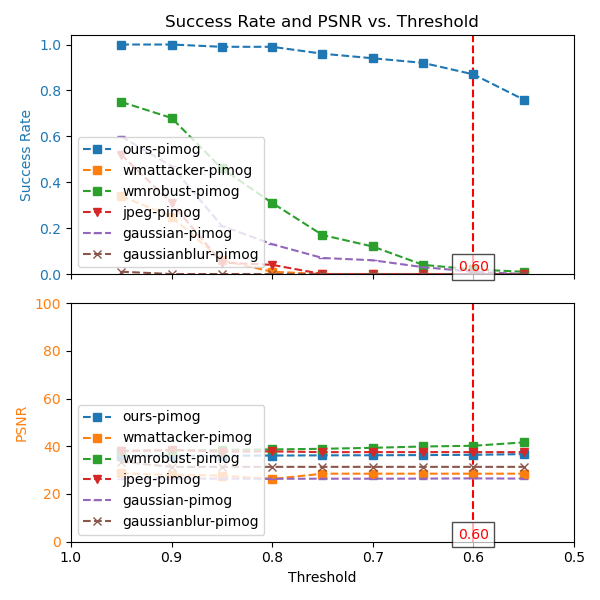} 
    \vspace{-6mm}
    \caption{The detailed success rate of PIMoG evasion attacks and the corresponding PSNR visual metric.}
    \label{fig:evasion-related-pimog-psnr}
\end{minipage}\hfill
\begin{minipage}{0.48\linewidth}
    \centering
    \includegraphics[width=\linewidth]{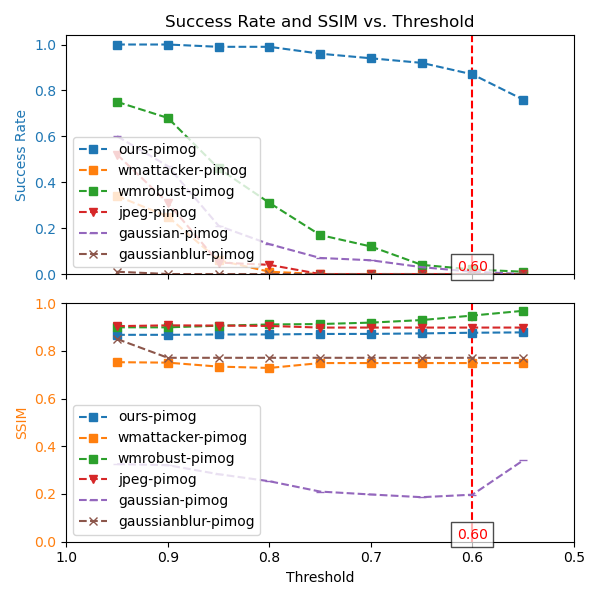} 
   
    \vspace{-3mm}
    \caption{The detailed success rate of PIMoG evasion attacks and the corresponding SSIM visual metric.}
    \label{fig:evasion-related-pimog-ssim}
\end{minipage}
\end{figure}

\begin{figure}[!t]
\begin{minipage}{0.48\linewidth}
    \centering
    \includegraphics[width=\linewidth]{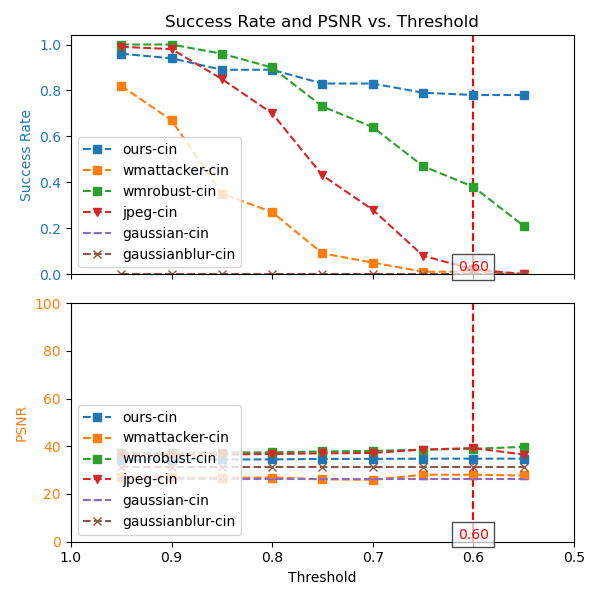} 
    
    \vspace{-6mm}
    \caption{The detailed success rate of CIN evasion attacks and the corresponding PSNR visual metric.}
    \label{fig:evasion-related-cin-psnr}
\end{minipage}\hfill
\begin{minipage}{0.48\linewidth}
    \centering
    \includegraphics[width=\linewidth]{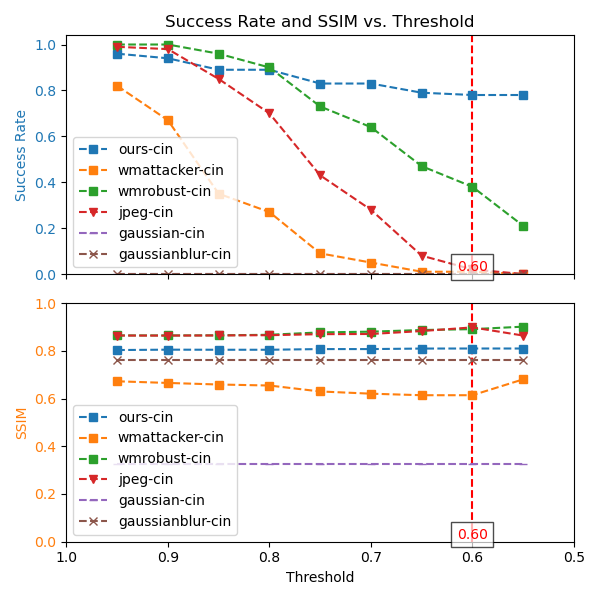} 
    
    \vspace{-6mm}
    \caption{The detailed success rate of CIN evasion attacks and the corresponding SSIM visual metric.}
    \label{fig:evasion-related-cin-ssim}
\end{minipage}
\end{figure}

\begin{figure}[!t]
\begin{minipage}{0.48\linewidth}
    \centering
    \includegraphics[width=\linewidth]{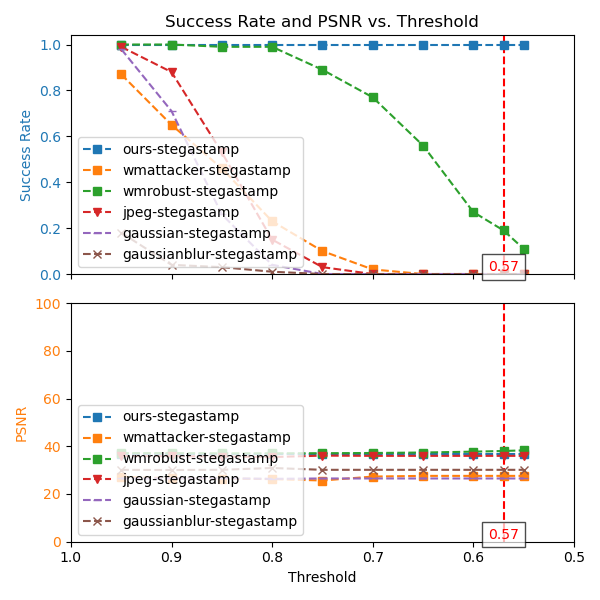} 
    
    \vspace{-6mm}
    \caption{The detailed success rate of StegaStamp evasion attacks and the corresponding PSNR visual metric.}
    \label{fig:evasion-related-stegastamp-psnr}
\end{minipage}\hfill
\begin{minipage}{0.48\linewidth}
    \centering
    \includegraphics[width=\linewidth]{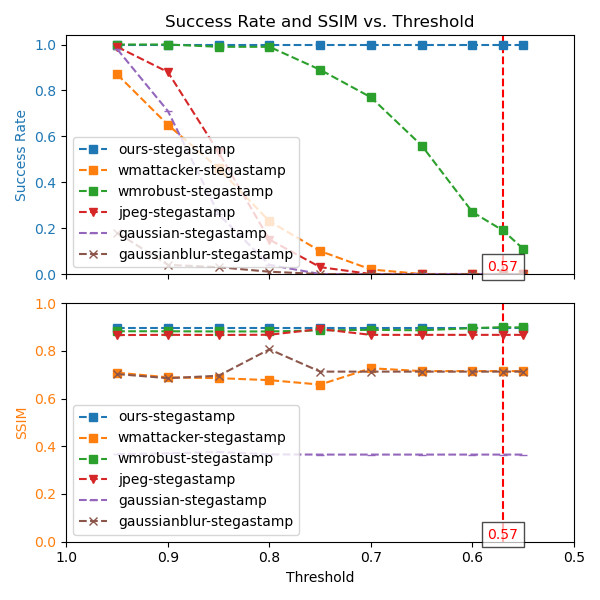} 
    
    \vspace{-6mm}
    \caption{The detailed success rate of StegaStamp evasion attacks and the corresponding SSIM visual metric.}
    \label{fig:evasion-related-stegastamp-ssim}
\end{minipage}
\end{figure}

\begin{figure}[!t]
\begin{minipage}{0.48\linewidth}
    \centering
    \includegraphics[width=\linewidth]{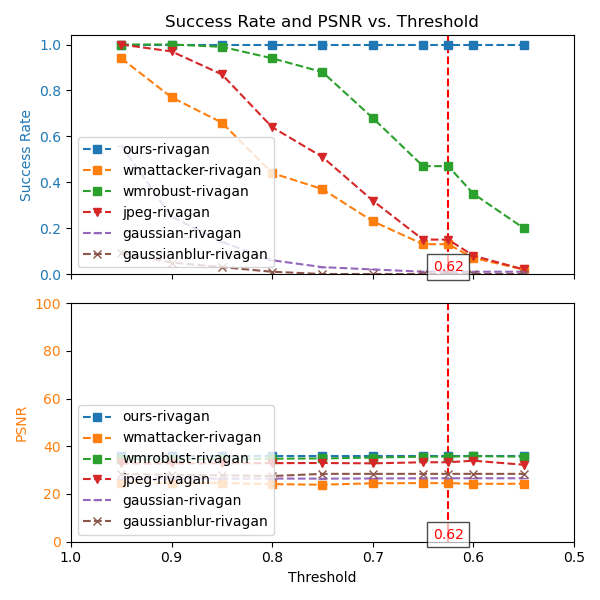} 
    
    \vspace{-6mm}
    \caption{The detailed success rate of RivaGan evasion attacks and the corresponding PSNR visual metric.}
    \label{fig:evasion-related-rivagan-psnr}
\end{minipage}\hfill
\begin{minipage}{0.48\linewidth}
    \centering
    \includegraphics[width=\linewidth]{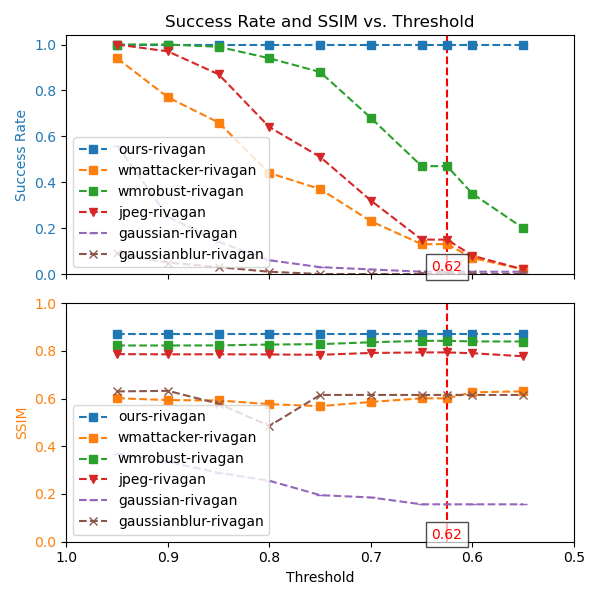} 
    
    \vspace{-6mm}
    \caption{The detailed success rate of RivaGan evasion attacks and the corresponding SSIM visual metric.}
    \label{fig:evasion-related-rivagan-ssim}
\end{minipage}
\end{figure}

\begin{figure}[!t]
\begin{minipage}{0.48\linewidth}
    \centering
    \includegraphics[width=\linewidth]{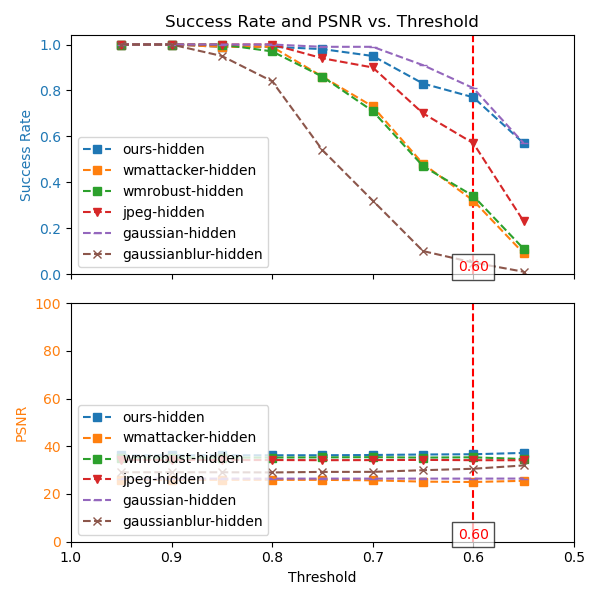} 
    
    \vspace{-6mm}
    \caption{The detailed success rate of HiDDeN evasion attacks and the corresponding PSNR visual metric.}
    \label{fig:evasion-related-hidden-psnr}
\end{minipage}\hfill
\begin{minipage}{0.48\linewidth}
    \centering
    \includegraphics[width=\linewidth]{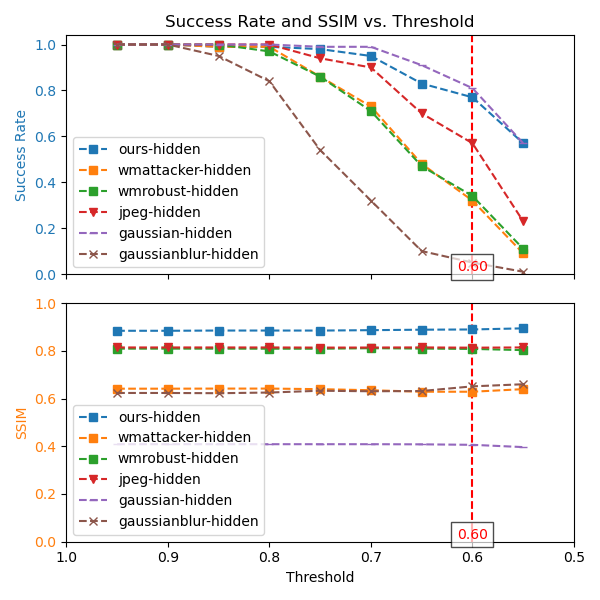} 
    
    \vspace{-6mm}
    \caption{The detailed success rate of HiDDeN evasion attacks and the corresponding SSIM visual metric.}
    \label{fig:evasion-related-hidden-ssim}
\end{minipage}
\end{figure}

\subsection{Spoof Attack against Related Works}\label{sec:Appendix_Additional Experimental Results_Spoof Attack}
We provide detailed spoof attack results as shown in ~\cref{fig:spoof-related-pimog-psnr}-~\cref{fig:spoof-related-hidden-ssim}, including the attack success rates and visual metrics corresponding to all watermark detection thresholds. Additionally, we provide examples of attack results against other algorithms, as shown in ~\cref{fig:spoof-pimog,fig:spoof-hidden}.

\subsection{Ablation Study for Evasion Attack}
\label{sec:appendix:Ablation_Study_for_Evasion_Attack}
We provide detailed ablation study for evasion attack results as shown in ~\cref{fig:evasion-ab-pimog-psnr}-~\cref{fig:evasion-ab-cin-ssim}, including the attack success rates and visual metrics corresponding to all watermark detection thresholds.

\begin{figure}[!t]
\begin{minipage}{0.48\linewidth}
    \centering
    \includegraphics[width=\linewidth]{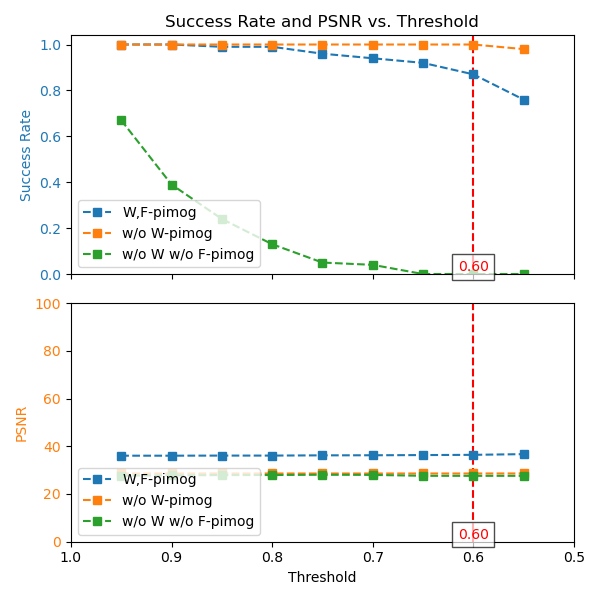} 
    \vspace{-6mm}
    \caption{The detailed success rate of PIMoG evasion attacks and the corresponding PSNR visual metric.}
    \label{fig:evasion-ab-pimog-psnr}
\end{minipage}\hfill
\begin{minipage}{0.48\linewidth}
    \centering
    \includegraphics[width=\linewidth]{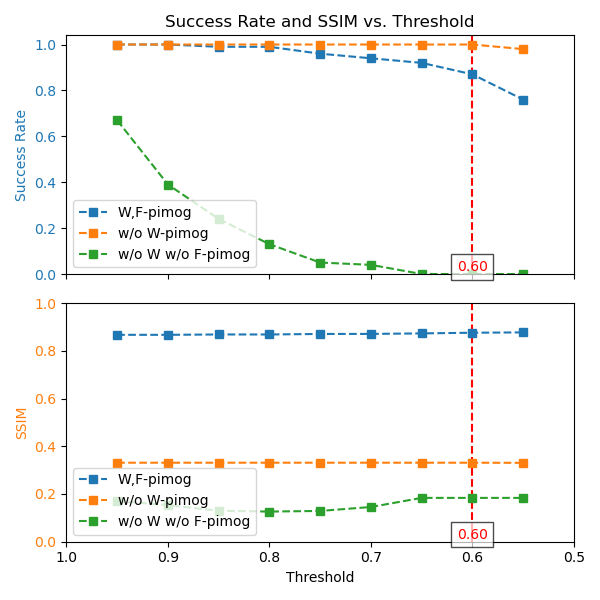} 
    \vspace{-6mm}
    \caption{The detailed success rate of PIMoG evasion attacks and the corresponding SSIM visual metric.}
    \label{fig:evasion-ab-pimog-ssim}
\end{minipage}
\end{figure}

\begin{figure}[!t]
\begin{minipage}{0.48\linewidth}
    \centering
    \includegraphics[width=\linewidth]{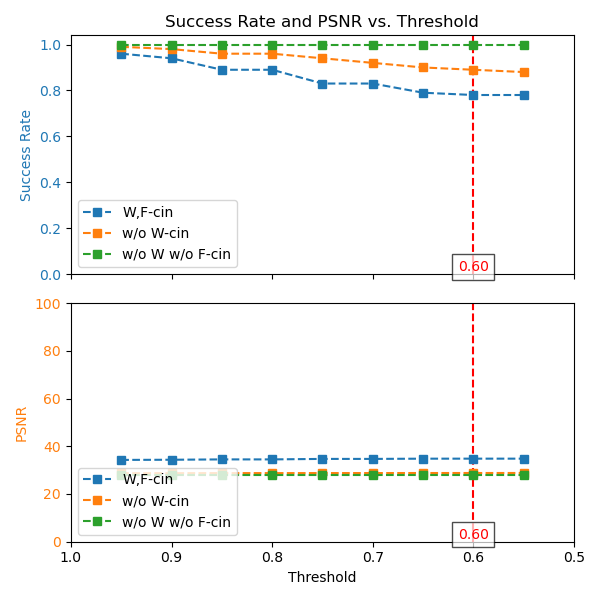} 
    \vspace{-6mm}
    \caption{The detailed success rate of CIN evasion attacks and the corresponding PSNR visual metric.}
    \label{fig:evasion-ab-cin-psnr}
\end{minipage}\hfill
\begin{minipage}{0.48\linewidth}
    \centering
    \includegraphics[width=\linewidth]{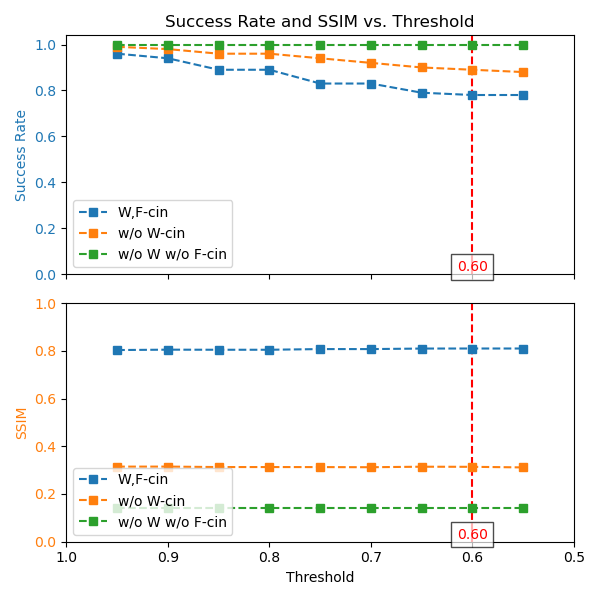} 
    \vspace{-6mm}
    \caption{The detailed success rate of CIN evasion attacks and the corresponding SSIM visual metric.}
    \label{fig:evasion-ab-cin-ssim}
\end{minipage}
\end{figure}

\subsection{Ablation Study for Spoof Attack}
\label{sec:appendix:Ablation_Study_for_Spoof_Attack}
We provide detailed ablation study for spoofing attack results as shown in ~\cref{fig:spoof-ab-pimog-psnr}-~\cref{fig:spoof-ab-cin-ssim}, including the attack success rates and visual metrics corresponding to all watermark detection thresholds.

\begin{figure}[!t]
\begin{minipage}{0.48\linewidth}
    \centering
    \includegraphics[width=\linewidth]{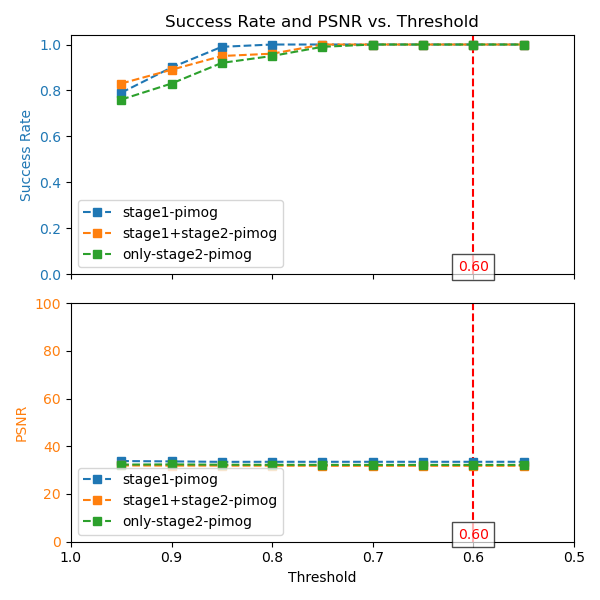} 
    \vspace{-6mm}
    \caption{The detailed success rate of PIMoG forgery attacks and the corresponding PSNR visual metric.}
    \label{fig:spoof-ab-pimog-psnr}
\end{minipage}\hfill
\begin{minipage}{0.48\linewidth}
    \centering
    \includegraphics[width=\linewidth]{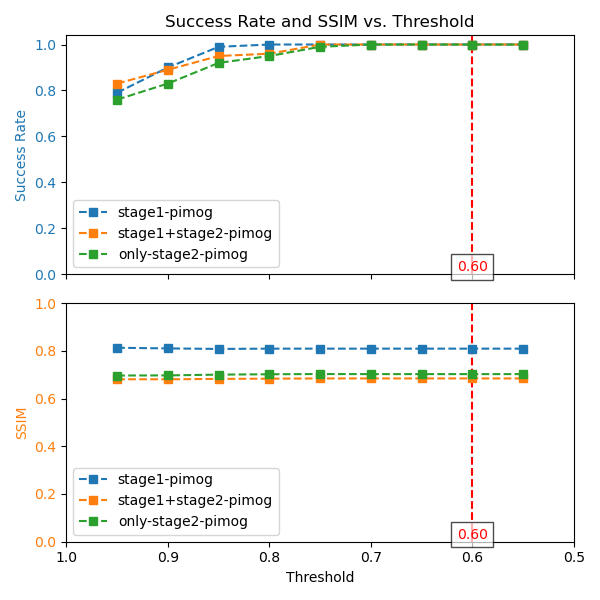} 
    \vspace{-6mm}
    \caption{The detailed success rate of PIMoG forgery attacks and the corresponding SSIM visual metric.}
    \label{fig:spoof-ab-pimog-ssim}
\end{minipage}
\end{figure}

\begin{figure}[!t]
\begin{minipage}{0.48\linewidth}
    \centering
    \includegraphics[width=\linewidth]{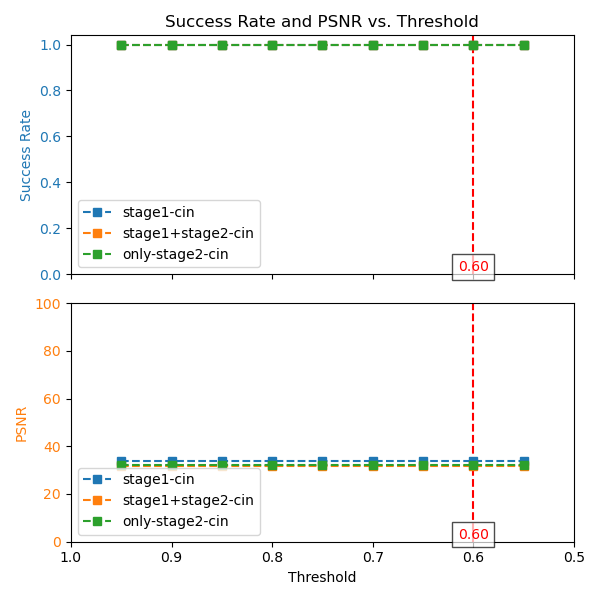} 
    \vspace{-6mm}
    \caption{The detailed success rate of CIN forgery attacks and the corresponding PSNR visual metric.}
    \label{fig:spoof-ab-cin-psnr}
\end{minipage}\hfill
\begin{minipage}{0.48\linewidth}
    \centering
    \includegraphics[width=\linewidth]{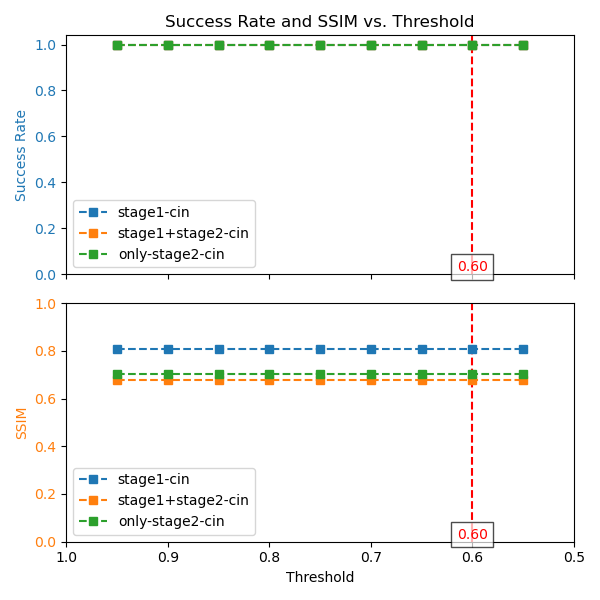} 
    \vspace{-6mm}
    \caption{The detailed success rate of CIN forgery attacks and the corresponding SSIM visual metric.}
    \label{fig:spoof-ab-cin-ssim}
\end{minipage}
\end{figure}


\begin{figure}[!t]
    \centering
    \includegraphics[width=\linewidth]{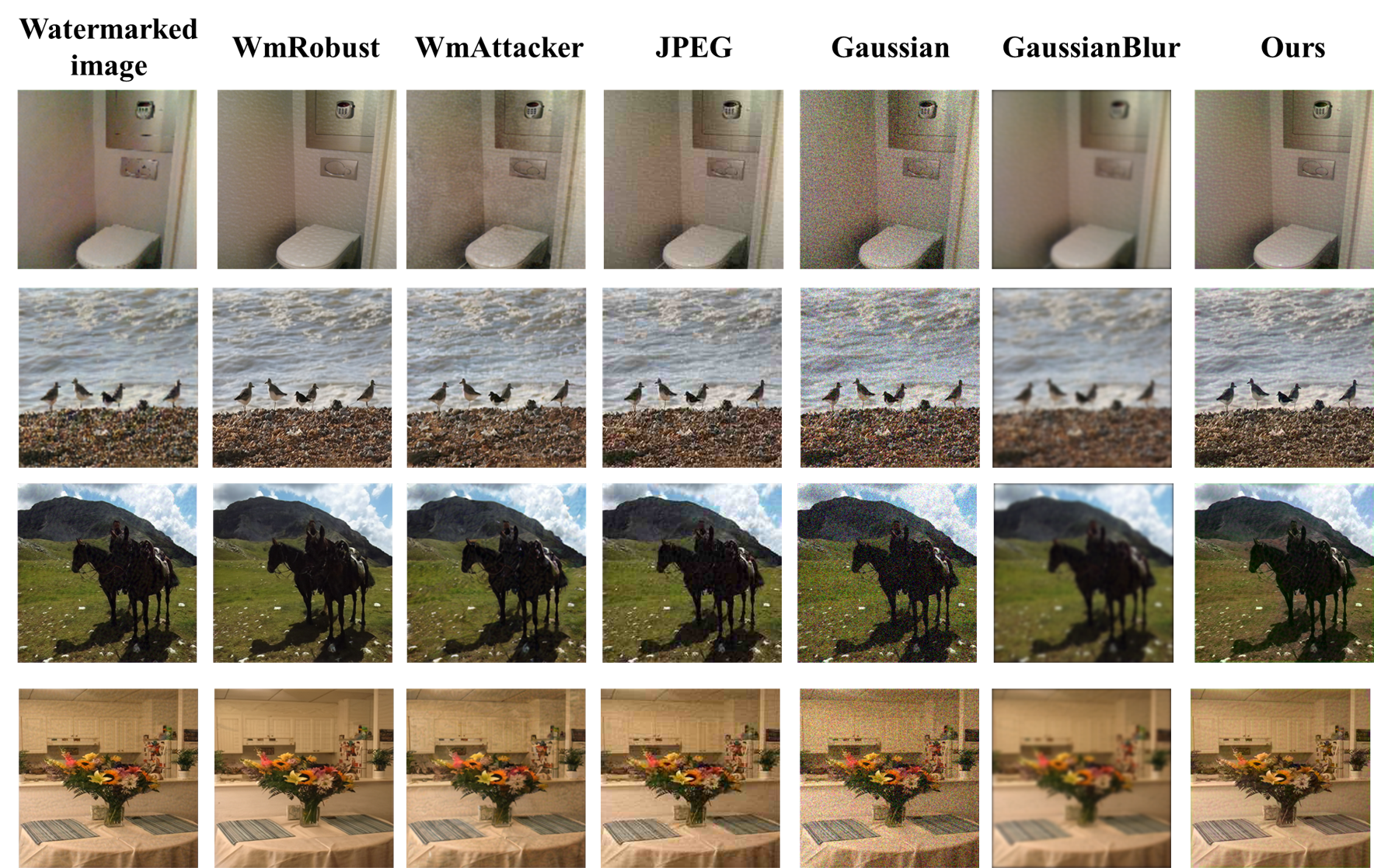} 
   
    \vspace{-3mm}
    \caption{Examples of the evasion attack against HiDDeN.}
    \label{fig:evasion-hidden}
\end{figure}

\begin{figure}[!t]
    \centering
    \includegraphics[width=\linewidth]{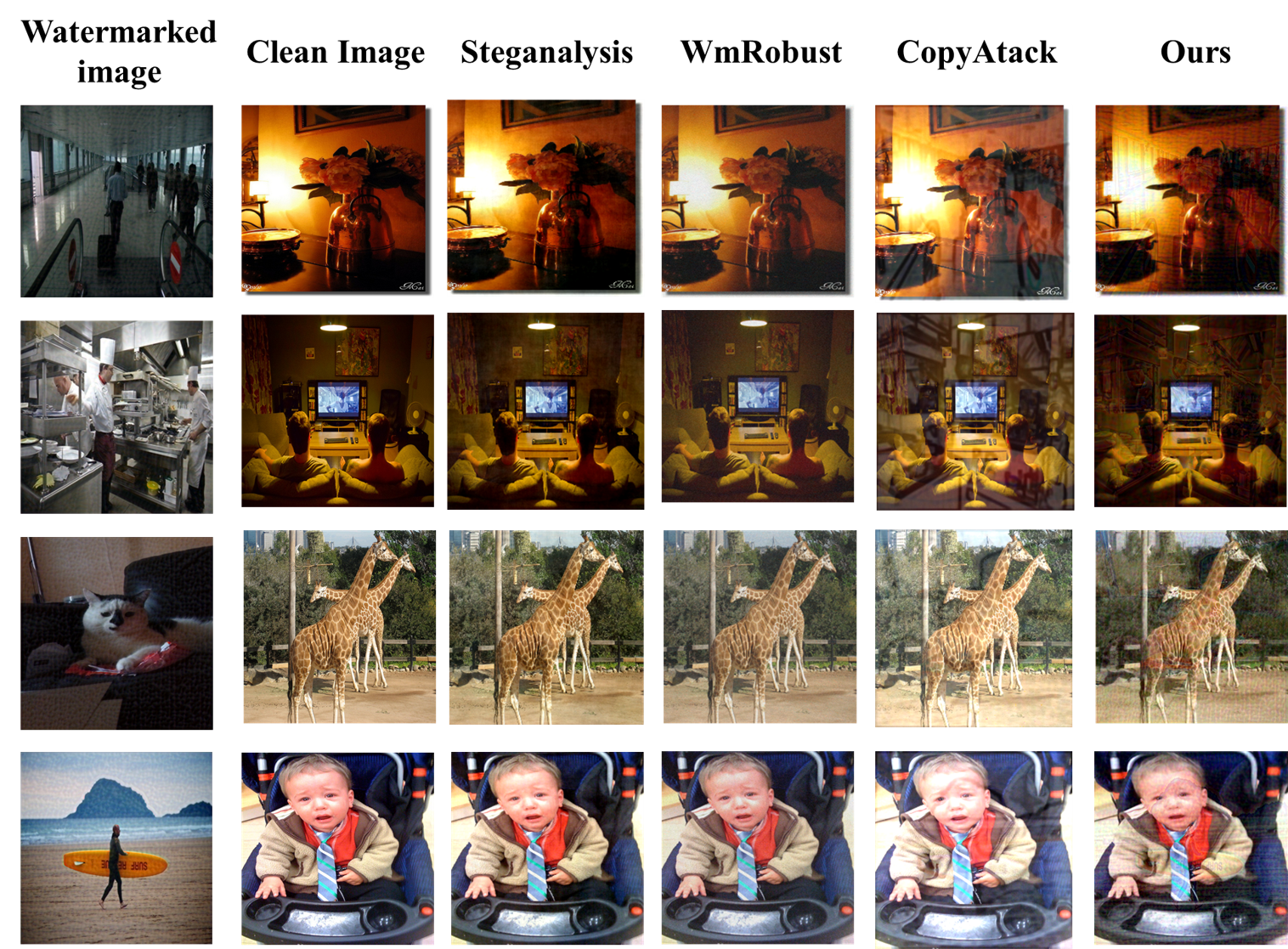} 
    
    \vspace{-3mm}
    \caption{Examples of the spoof attack against HiDDeN.}
    \label{fig:spoof-hidden}
\end{figure}

\begin{figure}[!t]
\begin{minipage}{0.48\linewidth}
    \centering
    \includegraphics[width=\linewidth]{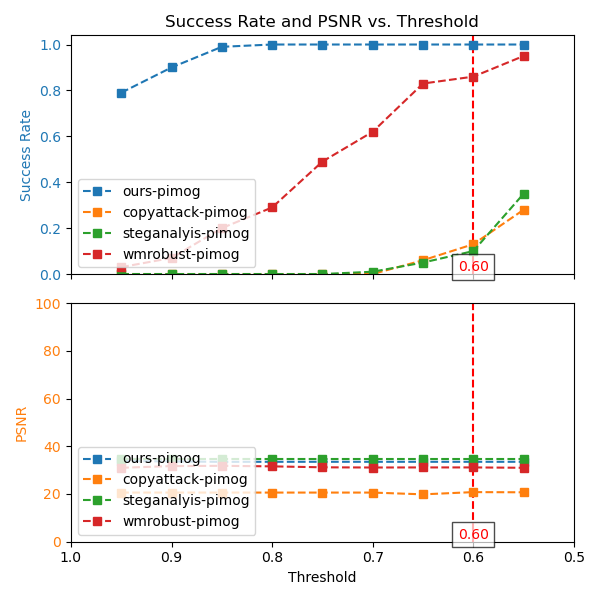} 
   
    \vspace{-3mm}
    \caption{The detailed success rate of PIMoG forgery attacks and the corresponding PSNR visual metric.}
    \label{fig:spoof-related-pimog-psnr}
\end{minipage}\hfill
\begin{minipage}{0.48\linewidth}
    \centering
    \includegraphics[width=\linewidth]{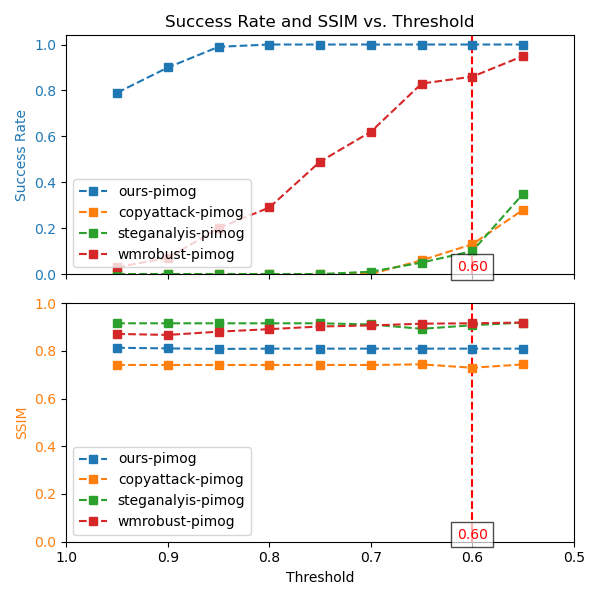} 
    
    \vspace{-3mm}
    \caption{The detailed success rate of PIMoG forgery attacks and the corresponding SSIM visual metric.}
    \label{fig:spoof-related-pimog-ssim}
\end{minipage}
\end{figure}

\begin{figure}[!t]
\begin{minipage}{0.48\linewidth}
    \centering
    \includegraphics[width=\linewidth]{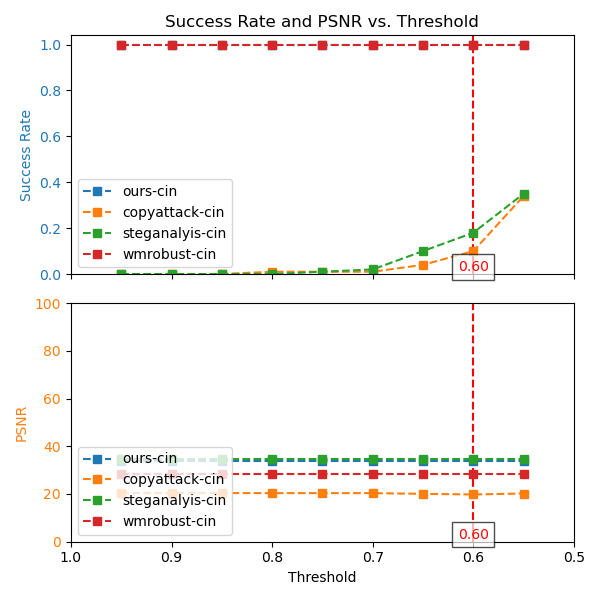} 
   
    \vspace{-6mm}
    \caption{The detailed success rate of CIN forgery attacks and the corresponding PSNR visual metric.}
    \label{fig:spoof-related-cin-psnr}
\end{minipage}\hfill
\begin{minipage}{0.48\linewidth}
    \centering
    \includegraphics[width=\linewidth]{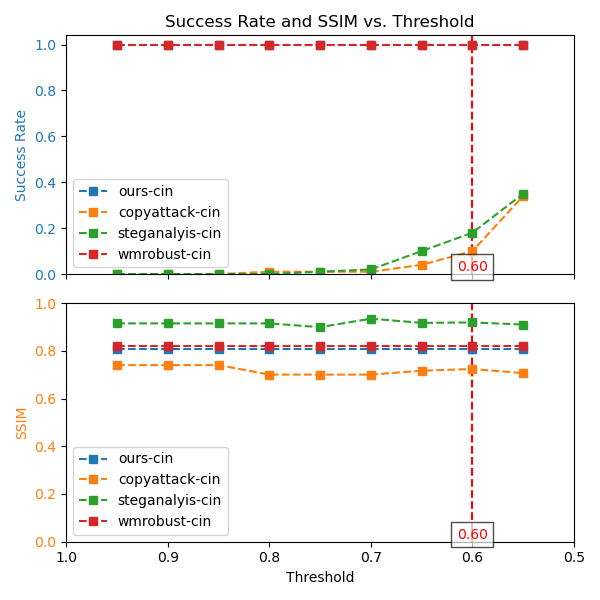} 
    
    \vspace{-6mm}
    \caption{The detailed success rate of CIN forgery attacks and the corresponding SSIM visual metric.}
    \label{fig:spoof-related-cin-ssim}
\end{minipage}
\end{figure}

\begin{figure}[!t]
\begin{minipage}{0.48\linewidth}
    \centering
    \includegraphics[width=\linewidth]{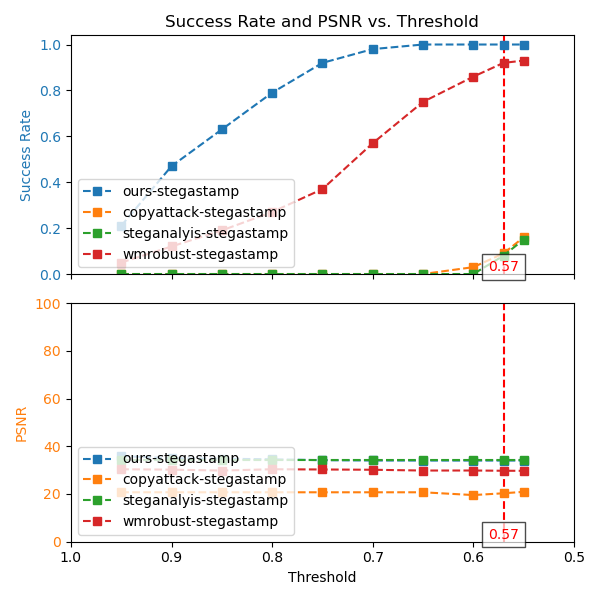} 
    \label{fig:spoof-related-stegastamp-psnr}
    \vspace{-6mm}
    \caption{The detailed success rate of StegaStamp forgery attacks and the corresponding PSNR visual metric.}
    \label{fig:spoof-related-stegastamp-psnr}
\end{minipage}\hfill
\begin{minipage}{0.48\linewidth}
    \centering
    \includegraphics[width=\linewidth]{pics/appendix/spoof-related/pic-stegastamp-evasion-coco-cmp-spoof-PSNR.png} 
    
    \vspace{-6mm}
    \caption{The detailed success rate of StegaStamp forgery attacks and the corresponding SSIM visual metric.}
    \label{fig:spoof-related-stegastamp-ssim}
\end{minipage}
\end{figure}

\begin{figure}[!t]
\begin{minipage}{0.48\linewidth}
    \centering
    \includegraphics[width=\linewidth]{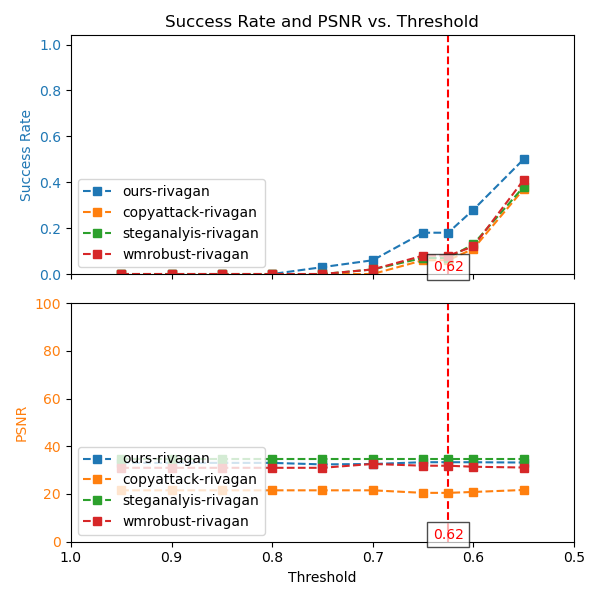} 
    
    \vspace{-6mm}
    \caption{The detailed success rate of RivaGan forgery attacks and the corresponding PSNR visual metric.}
    \label{fig:spoof-related-rivagan-psnr}
\end{minipage}\hfill
\begin{minipage}{0.48\linewidth}
    \centering
    \includegraphics[width=\linewidth]{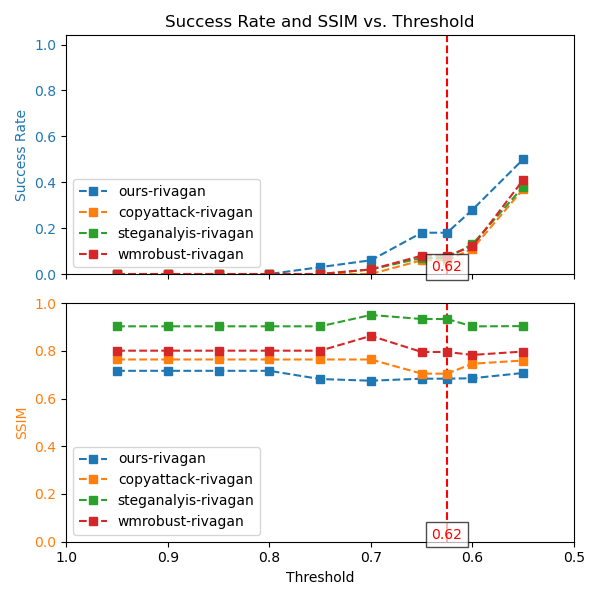} 
    
    \vspace{-6mm}
    \caption{The detailed success rate of RivaGan forgery attacks and the corresponding SSIM visual metric.}
    \label{fig:spoof-related-rivagan-ssim}
\end{minipage}
\end{figure}

\begin{figure}[!t]
\begin{minipage}{0.48\linewidth}
    \centering
    \includegraphics[width=\linewidth]{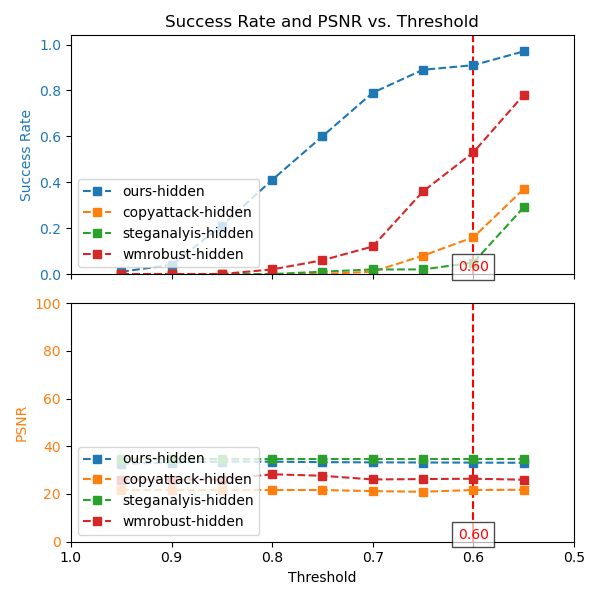} 
    
    \vspace{-6mm}
    \caption{The detailed success rate of HiDDeN forgery attacks and the corresponding PSNR visual metric.}
    \label{fig:spoof-related-hidden-psnr}
\end{minipage}\hfill
\begin{minipage}{0.48\linewidth}
    \centering
    \includegraphics[width=\linewidth]{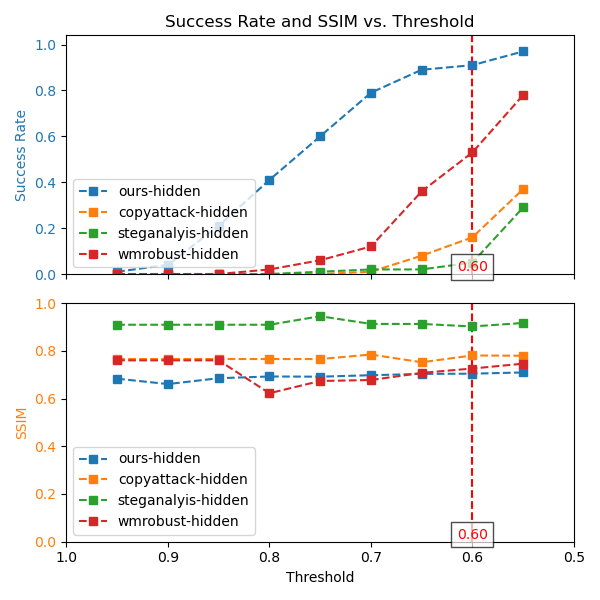} 
    
    \vspace{-6mm}
    \caption{The detailed success rate of HiDDeN forgery attacks and the corresponding SSIM visual metric.}
    \label{fig:spoof-related-hidden-ssim}
\end{minipage}
\end{figure}


\section{Implementation Details}\label{sec:Appendix_Implementation Details}

\begin{algorithm}[tb]
   \caption{Evasion Attack}
   \label{alg:evasion algo}
\begin{algorithmic}
   \STATE {\bfseries Input:} watermarked image $I_{wm}$, step $m$, feature extracter $\mathcal{F}$, clustering algorithm $\mathcal{C}$, optimizer $\mathcal{O}$, objective function $\mathcal{L}$, perturbation budget $\epsilon$
   \STATE {\bfseries Output:} attacked watermarked image $I_a$
   \STATE // find the k clusters with the fewest samples within the cluster, and transform the channel feature indices within the clusters into weights for locating information leakage.
   \STATE $\mathcal{W} \xleftarrow{\mathcal{C}} \mathcal{F}(I_{wm})$ 
   \STATE $\delta \xleftarrow{} noise$
   
   \FOR{$i=1$ {\bfseries to} $m-1$}
   \STATE $\delta \xleftarrow{} \mathcal{O}(\delta,-\nabla_\delta\mathcal{L}(\mathcal{W} \cdot \mathcal{F}(I_{wm}), \mathcal{W}\cdot \mathcal{F}(I_{wm} + \delta))))$
   \IF{$||\delta||_\infty > \epsilon$}
   \STATE  $\delta \xleftarrow{} \delta \cdot \frac{\epsilon}{||\delta||_\infty}$
   \ENDIF
   \ENDFOR
   \STATE {\bfseries return} $I_a = I_{wm} + \delta$
\end{algorithmic}
\end{algorithm}

\begin{algorithm}[tb]
   \caption{Spoof Attack}
   \label{alg:spoof algo}
\begin{algorithmic}
   \STATE {\bfseries Input:} watermarked image $I_{wm}$,  clean image $I'$, step $m$, feature extracter $\mathcal{F}$, clustering algorithm $\mathcal{C}$, optimizer $\mathcal{O}$, objective function $\mathcal{L}$, perturbation budget $\epsilon$
   \STATE {\bfseries Output:} attacked watermarked image $I_a$
   \STATE // find the k clusters with the fewest samples within the cluster, and transform the channel feature indices within the clusters into weights for locating information leakage.
   \STATE $\mathcal{W} \xleftarrow{\mathcal{C}} \mathcal{F}(I_{wm})$

   \STATE $\delta \xleftarrow{} noise$
   \STATE // Stage-\uppercase\expandafter{\romannumeral1}
   \FOR{$i=1$ {\bfseries to} $m-1$}
   \STATE $\delta \xleftarrow{} \mathcal{O}(\delta,\nabla_\delta-\mathcal{L}(\mathcal{W} \cdot \mathcal{F}(I_{wm}), \mathcal{W}\cdot \mathcal{F}(I_{wm} + \delta))))$
   \IF{$||\delta||_\infty > \epsilon$}
   \STATE  $\delta \xleftarrow{} \delta \cdot \frac{\epsilon}{||\delta||_\infty}$
   \ENDIF
   \ENDFOR
    
   \STATE // Stage-\uppercase\expandafter{\romannumeral2}
   \STATE $\delta_s \xleftarrow{} noise$
   \FOR{$i=1$ {\bfseries to} $m-1$}
   \STATE $\delta_s \xleftarrow{} \mathcal{O}(\delta_s,\nabla_{\delta_s}\mathcal{L}((1-\mathcal{W}) \cdot \mathcal{F}(I_{wm}+\delta), (1-\mathcal{W})\cdot \mathcal{F}(I' + \delta_s)))$
   \STATE $\hat{\delta}=-\delta + \delta_s$
   \IF{$||\hat{\delta}||_\infty > \epsilon$}
   \STATE  $\hat{\delta} \xleftarrow{} \hat{\delta} \cdot \frac{\epsilon}{||\hat{\delta}||_\infty}$
   \STATE $\delta_s \xleftarrow{} \hat{\delta} + \delta$
   \ENDIF
   \ENDFOR
   \STATE {\bfseries return} $I_a \in \{I' - \delta, I'-\delta+\delta_s\}$
\end{algorithmic}
\end{algorithm}

\section{Proofs}\label{sec:Appendix_Proofs}

\begin{definition}
A intuitive definition of embeddable threshold:
\begin{gather*}
C(I) = \sup_{W \in \mathcal{P}_1}{\frac{||W||_2}{||I||_2}} \\\\
s.t. PNSR(I, I+W) \ge TV
\end{gather*}
$TV$ represents the lower bound of the visual quality.
\end{definition}

\begin{proposition}
When the robustness requirement exceeds $C(I)$, a decline in visual quality is inevitable.
\end{proposition}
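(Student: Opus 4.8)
The plan is to derive two opposing constraints on the embedding power $\epsilon\|W\|$ --- a \emph{lower} bound forced by robustness and an \emph{upper} bound imposed by the visual-quality threshold $C(I)$ --- and then exhibit a regime in which the two are incompatible, so that maintaining $\mathrm{PSNR}(I, I_{wm}) \ge TV$ becomes impossible. Throughout I would treat the distortion layer $\mathcal{T}$ as inducing an additive noisy channel on the embedded signal, so that recovering $wm$ from $I_{wm} + \eta$ is a communication problem with transmitted signal $\epsilon W$ and noise $\eta$ of variance $\delta_\eta^2$.

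First I would make the robustness requirement precise. Reliable recovery under the bit-error budget, $\|wm - \mathcal{D}(I_{wm}+\eta)\| \le \mathcal{B}$, demands by Shannon's channel-coding theorem that the channel rate dominate the source entropy, $R \ge H(wm)$, with $R = \tfrac{1}{2}\log\!\left(1 + \epsilon^2\|W\|^2/\delta_\eta^2\right)$. Solving this inequality for the signal magnitude yields the lower bound
\begin{equation*}
\epsilon\|W\| \;\ge\; \sqrt{\left(2^{2H(wm)} - 1\right)\delta_\eta^2},
\end{equation*}
so that stronger robustness (a larger $H(wm)$ to protect, or a larger noise variance $\delta_\eta^2$) strictly raises the power the encoder must inject. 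I note that the operative direction here is $\ge$, a \emph{minimum} required power, which is precisely what later forces the contradiction.

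Next I would convert the visual-quality constraint into an upper bound. By the definition of the embeddable threshold, any perturbation confined to the favorable subspace $\mathcal{P}_r$ and respecting $\mathrm{PSNR}(I, I+W) \ge TV$ satisfies $\|W\|_2 \le C(I)\,\|I\|_2$, since $C(I)$ is exactly the supremum of $\|W\|_2/\|I\|_2$ over that feasible set. Combining the two bounds, whenever
\begin{equation*}
\sqrt{\left(2^{2H(wm)} - 1\right)\delta_\eta^2} \;>\; C(I)\,\|I\|_2,
\end{equation*}
no $W \in \mathcal{P}_r$ can simultaneously meet the robustness floor and the quality ceiling. The encoder is then forced either to push $\|W\|$ beyond $C(I)\|I\|_2$ --- which by construction drives $\mathrm{PSNR}$ below $TV$ --- or to spill embedding energy into the content subspace $\mathcal{P}_c$, polluting image semantics; in both cases visual quality declines, which is the claim.

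The main obstacle I anticipate is justifying the channel abstraction rigorously: the distortion layer $\mathcal{T}$ is not literally an additive white Gaussian channel, so I would need to argue that modeling $\eta$ as zero-mean additive noise of variance $\delta_\eta^2$, independent of $\epsilon W$, with $\mathcal{D}$ treated as a rate-$R$ achievable decoder, is a valid (worst-case or tight) surrogate for the true robust-training objective. A secondary technical point is the clean passage from the logarithmic $\mathrm{PSNR}$ constraint to the $\ell_2$-ratio bound defining $C(I)$, together with confirming that $H(wm)$ is the correct source rate after the encoder's implicit redundancy coding; I expect both to be routine once the channel model is fixed.
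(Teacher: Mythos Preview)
Your proposal is correct and follows essentially the same route as the paper: derive the Shannon channel-capacity lower bound $\epsilon\|W\| \ge \sqrt{(2^{2H(wm)}-1)\delta_\eta^2}$ from the robustness requirement, derive the upper bound $\epsilon\|W\|_2 \le C(I)\|I\|_2$ from the visual-quality definition of $C(I)$, and observe the contradiction when the former exceeds the latter. One small slip: your upper bound should read $\epsilon\|W\|_2 \le C(I)\|I\|_2$ (the actual perturbation in $I_{wm} = I + \epsilon W$ is $\epsilon W$, so applying the supremum definition to this perturbation introduces the factor $\epsilon$), which is exactly what you need to close the contradiction against the lower bound on $\epsilon\|W\|$.
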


\begin{proof}
Let the distortion layer $\mathcal{T}$ introduce noise $\eta \sim \mathcal{T}$, with the requirement that
$$||wm-\mathcal{D}(I_{wm} + \eta)|| \le \mathcal{B}$$
$\mathcal{B}$ is bit error rate.Then computing channel capacity:
$$R=\frac{1}{2}\log(1+\frac{\epsilon^2||W||^2}{\delta_{\eta}^2})$$

To correctly transmit $K$ bits of information, the following conditions must be met:
\begin{align*}
    R \ge H(wm) = k \Rightarrow \frac{1}{2}\log(1+\frac{\epsilon^2||W||^2}{\delta_{\eta}^2}) \ge H(wm) = k\\
    \Rightarrow \epsilon^2||W||^2_2 \ge (2^{2H(wm)}-1)\delta_{\eta^2}
\end{align*}





according to $C(I) = \sup_{W \in \mathcal{P}_r}{\frac{||W||_2}{||I||_2}} $, 
we get:
$$
\epsilon||W||_2 \le C(I)||I||_2
$$

Let the robustness requirement be $$\epsilon^2||W||^2_2 \ge (2^{2H(wm)}-1)\delta_{\eta^2}$$ 
and the visual quality constraint be $$\epsilon||W||_2 \le C(I)||I||_2$$
When $\sqrt{(2^{2H(wm)}-1)\delta_{\eta^2}} > C(I)||I||_2$, the system cannot simultaneously satisfy both, and it is necessary to increase $C(I)$
\end{proof}



\end{document}